
\documentclass{article}

\usepackage{times}
\usepackage{graphicx} 
\usepackage{caption}
\usepackage{subcaption}

\usepackage{natbib}

\usepackage{algorithm}
\usepackage{algorithmic}

\usepackage{hyperref}



\usepackage[accepted]{icml2017}

\icmltitlerunning{Approximate Steepest Coordinate Descent (ASCD)}

\graphicspath{ {pics/} }
\usepackage{paralist}
\usepackage{booktabs} 


\usepackage{amsmath,amssymb,xspace,amsthm,Tabbing,bm}
\usepackage{color,mathtools}
\usepackage{listings}
\usepackage{multirow}
\usepackage{bbm}

\usepackage{wrapfig}
\usepackage{tikz}
\usepackage{wasysym}





\def\argmax{\mathop{\rm arg\,max}}
\def\argmin{\mathop{\rm arg\,min}}

\newcommand{\reals}{\mathbb{R}}










\def\argmax{\mathop{\rm arg\,max}}
\def\argmin{\mathop{\rm arg\,min}}

\newcommand{\vv}{\mathbf{v}}








\newcommand{\bq}{\begin{equation}}
\newcommand{\eq}{\end{equation}}
\newcommand{\ba}{\begin{eqnarray}}
\newcommand{\ea}{\end{eqnarray}}


\def\R{{\reals}}


\newcommand{\remove}[1]{}

\newcommand{\norm}[1]{\left\lVert #1\right\rVert}
\newcommand{\abs}[1]{\left\lvert #1\right\rvert}




\newcommand{\xv}{\bm{x}}
\newcommand{\yv}{\bm{y}}

\newcommand{\av}{\bm{a}}
\newcommand{\bv}{\bm{b}}

\newcommand{\evv}{\bm{e}}

\newcommand{\uv}{\bm{u}}

\newcommand{\wv}{\bm{w}}
\newcommand{\omegav}{\bm{\omega}}


\newcommand{\alphav}{\bm{\alpha}}





\newcommand{\E}[1]{\mathbb{E}\left[#1\right] } 
\newcommand{\EE}[2]{\mathbb{E}_{#1}\left[#2\right] }

\newtheorem{definition}{Definition}[section]
\newtheorem{theorem}{Theorem}[section]

\newtheorem{lemma}[theorem]{Lemma}

\newcommand{\refLE}[1]{\ensuremath{\stackrel{(\ref{#1})}{\leq}}}
\newcommand{\refEQ}[1]{\ensuremath{\stackrel{(\ref{#1})}{=}}}

\newcommand{\dotprod}[1]{\langle #1\rangle}
\newcommand{\gv}{\bm{g}}
\newcommand{\ellv}{\bm{\ell}}
\newcommand{\gtv}{\bm{\tilde{g}}}
\newcommand{\rv}{\bm{r}}
\newcommand{\usv}{\bm{u}^\star}
\newcommand{\ellsv}{\bm{\ell}^\star}

\begin{document}

\twocolumn[
\icmltitle{Approximate Steepest Coordinate Descent}

\icmlsetsymbol{equal}{*}

\begin{icmlauthorlist}
\icmlauthor{Sebastian U. Stich}{epfl}
\icmlauthor{Anant Raj}{mpi}
\icmlauthor{Martin Jaggi}{epfl}
\end{icmlauthorlist}

\icmlaffiliation{epfl}{EPFL}
\icmlaffiliation{mpi}{Max Planck Institute for Intelligent Systems}

\icmlcorrespondingauthor{Sebastian U. Stich}{sebastian.stich@epfl.ch}

\vskip 0.3in
]

\printAffiliationsAndNotice{}  

\begin{abstract} 
We propose a new selection rule for the coordinate selection in coordinate descent methods for huge-scale optimization. The efficiency of this novel scheme is provably better than the efficiency of uniformly random selection, and can reach the efficiency of steepest coordinate descent (SCD), enabling an acceleration of a factor of up to $n$, the number of coordinates. In many practical applications, our scheme can be implemented at no extra cost and computational efficiency very close to the faster uniform selection.
Numerical experiments with Lasso and Ridge regression show promising improvements, in line with our theoretical guarantees.
\end{abstract}

\section{Introduction}
\label{sec:intro}
Coordinate descent (CD) methods have attracted a substantial interest the optimization community in the last few years~\cite{nesterov2012,richtarik:2016}.
Due to their
computational efficiency, scalability, as well as their ease of implementation, 
these methods are the state-of-the-art for a wide selection of 
machine learning and 
signal processing applications \citep{Fu:1998cd,
Hsieh:2008bd,
wright2015
}. This is also theoretically well justified:
The complexity estimates for CD methods are in general better than the estimates for methods that compute the full gradient in one batch pass~\cite{nesterov2012,Nesterov:2017}.

In many CD methods, the active coordinate is picked at random, according to a probability distribution.
For smooth functions it is theoretically well understood how the sampling procedure is related to the efficiency of the scheme and which distributions give the best complexity estimates~\cite{nesterov2012,Zhaoa15,AllenYhu:2016,zheng:2016,Nesterov:2017}. %
For nonsmooth and composite functions --- that appear in many machine learning applications --- the picture is less clear. 
For instance in \cite{ShalevShwartz:2013wl,Friedman:2007ut,Friedman:2010wm,ShalevShwartz:2011vo} uniform sampling (UCD) is used, whereas other papers propose adaptive sampling strategies that change over time~\cite{Papa2015,Csiba:2015ue,Osokin:2016:MGB,Perekrestenko17a}.

A very simple deterministic strategy is to move along the direction corresponding to the component of the gradient with the maximal absolute value (steepest coordinate descent, SCD)~\cite{Boyd:2004uz,Tseng2009}. For smooth functions this strategy yields always better progress than UCD, and the speedup can reach a factor of the dimension~\cite{nutini2015coordinate}. However, SCD requires the computation of the whole gradient vector in each iteration which is prohibitive (except for special applications, cf.~\citet{Dhillon:2011uh,Shrivastava:2014:ALS}). %

In this paper we propose approximate steepest coordinate descent (ASCD), a novel scheme which combines the best parts of the aforementioned strategies: (i) ASCD maintains an approximation of the \emph{full} gradient in each iteration and selects the active coordinate 
among the components of this vector that have large absolute values --- similar to SCD; and (ii) in many situations the gradient approximation can be updated cheaply at no extra cost --- similar to %
UCD.
We show that regardless of the errors in the gradient approximation (even if they are infinite), ASCD performs always better than UCD.

Similar to the methods proposed 
in~\cite{Tseng2009} we also present variants of ASCD for composite problems. We confirm our theoretical findings by numerical experiments for Lasso and Ridge regression on a synthetic dataset as well as on the 
RCV1 (binary) dataset.

\vspace{-5pt}
\paragraph{Structure of the Paper and Contributions.}
In Sec.~\ref{sec:steep} we review the existing theory for SCD and (i) extend it to the setting of smooth functions.
We present (ii) a novel lower bound, showing that the complexity estimates for SCD and UCD can be equal in general. We (iii) introduce ASCD and the save selection rules for both smooth (Sec.~\ref{sec:algorithm}) and to composite functions (Sec.~\ref{sec:composite}). 
We prove that (iv) ASCD performs always better than UCD (Sec.~\ref{sec:algorithm}) and (v)
it can reach the performance of SCD (Sec.~\ref{sec:competitiveRatio}).
In Sec.~\ref{sec:gradUpdates} we discuss important applications where the gradient estimate can efficiently be maintained.
Our theory is supported by numerical evidence in Sec.~\ref{sec:emp_eval}, which reveals that (vi) ASCD performs extremely well on real data.

\paragraph{Notation.}
Define $[\xv]_i := \dotprod{\xv, \evv_i}$ with
$\evv_i$ the standard unit vectors in $\R^n$.
We abbreviate $\nabla_i f := [\nabla f]_i$.
A convex function $f \colon \R^n \to \R$ with coordinate-wise $L_i$-Lipschitz continuous gradients\footnote{%
$\abs{ \nabla_i f(\xv + \eta \evv_i) - \nabla_i f(\xv)} \leq L_i \abs{\eta},\quad \forall \xv \in \R^n, \eta \in \R$.
} for constants $L_i > 0$, $i \in [n] := \{1,\dots,n\}$, satisfies by the standard reasoning 
\begin{align}
f(\xv + \eta \evv_i) \leq f(\xv) + \eta \nabla_i f(\xv)  + \tfrac{L_i}{2} \eta^2  \label{eq-Ubound}
\end{align}
for all $\xv \in \R^n$ and $\eta \in \R$. A function is coordinate-wise $L$-smooth if $L_i \leq L$ for $i=1,\dots,n$.
For an optimization problem $\min_{\xv \in \R^n} f(\xv)$ define $X^\star := \argmin_{\xv \in \R^n} f(\xv)$ and denote by $\xv^\star \in \R^n$ an arbitrary element $\xv^\star \in X^\star$. 
\section{Steepest Coordinate Descent} \label{sec:steep}
In this section we present SCD %
and discuss its theoretical properties. The functions of interest are composite convex functions $F \colon \R^n \to \R$ of the form
\begin{align}
 F(\xv) := f(\xv) + \Psi(\xv) \label{def-composite}
\end{align}
where $f$ is coordinate-wise $L$-smooth and $\Psi$ convex and separable, that is that is $\Psi(\xv) = \sum_{i=1}^n \Psi_i([\xv]_i)$. 
In the first part of this section we focus on smooth problems, i.e. we assume that $\Psi \equiv 0$. %

Coordinate descent methods with constant step size generate a sequence $\{\xv_t\}_{t \geq 0}$ of iterates that satisfy the relation
\begin{align}
\xv_{t+1} = \xv_t - \tfrac{1}{L} \nabla_{i_t} f(\xv) \evv_{i_t} \,. \label{eq:grad_update}
\end{align}
In UCD the active coordinate $i_t$ is chosen uniformly at random from the set $[n]$, $i_t \in_{u.a.r.} [n]$. SCD chooses the coordinate according to the Gauss-Southwell (GS) rule:
\begin{align}
 i_{t} = \argmax_{i \in [n]} \nabla_{i} \abs{ f(\xv_t) } \,. \label{eq:scgd}
\end{align}

\subsection{Convergence analysis}
With the quadratic upper bound~\eqref{eq-Ubound} one can easily get a lower bound on the one step progress
\begin{align}
 \E{f(x_t) - f(x_{t+1}) \mid x_t} \geq \EE{i_t} { \tfrac{1}{2L} \abs{\nabla_{i_t} f(\xv_t)}^2}  \,. \label{eq-quadONESTEP}
\end{align}
For UCD and SCD the expression on the right hand side evaluates to
\begin{align}
\begin{split}
 \tau_{\rm UCD}(\xv_t) &:= \tfrac{1}{2nL} \norm{\nabla f(\xv_t)}_2^2 \\
 \tau_{\rm SCD}(\xv_t) &:= \tfrac{1}{2L} \norm{\nabla f(\xv_t)}_\infty^2  \label{def-ONESTEP}
 \end{split}
\end{align}
With Cauchy-Schwarz we find
\begin{align}
  \tfrac{1}{n}  \tau_{\rm SCD}(\xv_t) \leq  \tau_{\rm UCD}(\xv_t)  \leq  \tau_{\rm SCD}(\xv_t) \,. \label{eq-compareSU}
\end{align}
Hence, the lower bound on the one step progress of SCD is always at least as large as the lower bound on the one step progress of UCD. Moreover, the one step progress could be even lager by a factor of $n$. However, it is very difficult to formally prove that this linear speed-up holds  for more than one iteration, as the expressions in~\eqref{eq-compareSU} depend on the (a priori unknown) sequence of iterates $\{\xv_t\}_{t \geq 0}$.

\paragraph{Strongly Convex Objectives.}
\citet{nutini2015coordinate} present an elegant solution of this problem for $\mu_2$-strongly convex functions%
\footnote{A function is $\mu_p$-strongly convex in the $p$-norm, $p\geq 1$, if $f(\yv) \geq f(\xv) + \dotprod{\nabla f(\xv), \yv - \xv} + \frac{\mu_p}{2} \norm{\yv - \xv}_p^2$, $\forall \xv,\yv \in \R^n$.}. 
They propose to measure the strong convexity of the objective function in the $1$-norm instead of the $2$-norm. This gives rise to the lower bound
\begin{align}
 \tau_{\rm SCD}(\xv_t) \geq \tfrac{\mu_1}{L} \left(f(\xv_t) - f(\xv^\star) \right) \,,
\end{align}
where $\mu_1$ denotes the strong convexity parameter. By this, they get a uniform upper bound on the convergence that does not directly depend on local properties of the function, like for instance $\tau_{\rm SCD}(\xv_t)$, but just on $\mu_1$. It always holds $\mu_1 \leq \mu_2$, and for functions where both quantities are equal, SCD enjoys a linear speedup over UCD. %

\paragraph{Smooth Objectives.}
When the objective function $f$ is just smooth (but not necessarily strongly convex), then the analysis mentioned above is not applicable. We here extend the analysis from~\cite{nutini2015coordinate} to smooth functions.

\begin{theorem} \label{thm:smooth_steep_converge}
Let $f \colon \R^n \to \R$ be convex and coordinate-wise $L$-smooth. Then for the sequence $\{\xv_t\}_{t \geq 0}$ generated by SCD it holds:
\begin{align} \label{eq:smooth_converge_steep}
f(\xv_t) - f(\xv^\star) \leq \frac{2L R_1^2}{t}\,,
\end{align}
for $\displaystyle R_1 := \max_{\xv^\star \in X^\star} \left\{ \max_{\xv \in \R^n} \left[  \norm{\xv - \xv^\star}_1 \mid f(\xv) \leq f(\xv_0)  \right] \right\}$.
\end{theorem}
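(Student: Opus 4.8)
The plan is to run the classical telescoping argument for the optimality gap $\delta_t := f(\xv_t) - f(\xv^\star)$, combining three elementary facts: a one-step descent inequality coming from~\eqref{eq-Ubound} together with the Gauss--Southwell rule~\eqref{eq:scgd}; monotonicity of the function values, which confines the iterates to the sublevel set $\{\xv : f(\xv) \le f(\xv_0)\}$ and hence guarantees $\norm{\xv_t - \xv^\star}_1 \le R_1$; and Hölder's inequality, which turns convexity into a lower bound on $\norm{\nabla f(\xv_t)}_\infty$ in terms of $\delta_t$.

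First I would record the one-step progress: plugging $i = i_t$ and $\eta = -\tfrac1L \nabla_{i_t} f(\xv_t)$ into~\eqref{eq-Ubound} and using that $i_t$ attains the maximum in~\eqref{eq:scgd} gives
\begin{align*}
 f(\xv_t) - f(\xv_{t+1}) \;\ge\; \tfrac{1}{2L} \abs{\nabla_{i_t} f(\xv_t)}^2 \;=\; \tfrac{1}{2L}\norm{\nabla f(\xv_t)}_\infty^2 \;=\; \tau_{\rm SCD}(\xv_t)\,.
\end{align*}
In particular $f(\xv_{t+1}) \le f(\xv_t)$, so by induction $f(\xv_t) \le f(\xv_0)$ and each $\xv_t$ lies in the set over which the inner maximum in the definition of $R_1$ is taken; consequently $\norm{\xv_t - \xv^\star}_1 \le R_1$ for every $\xv^\star \in X^\star$ and every $t$. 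Next, convexity of $f$ and Hölder's inequality $\dotprod{\av,\bv} \le \norm{\av}_\infty\norm{\bv}_1$ yield
\begin{align*}
 \delta_t \;=\; f(\xv_t) - f(\xv^\star) \;\le\; \dotprod{\nabla f(\xv_t),\, \xv_t - \xv^\star} \;\le\; \norm{\nabla f(\xv_t)}_\infty \norm{\xv_t - \xv^\star}_1 \;\le\; R_1\,\norm{\nabla f(\xv_t)}_\infty\,,
\end{align*}
hence $\norm{\nabla f(\xv_t)}_\infty \ge \delta_t/R_1$.

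Substituting this into the descent inequality produces the recursion $\delta_t - \delta_{t+1} \ge \delta_t^2/(2LR_1^2)$. Since $0 \le \delta_{t+1} \le \delta_t$, dividing through by $\delta_t\delta_{t+1}$ gives $\tfrac{1}{\delta_{t+1}} - \tfrac{1}{\delta_t} \ge \tfrac{\delta_t}{2LR_1^2\,\delta_{t+1}} \ge \tfrac{1}{2LR_1^2}$, and summing over $0,\dots,t-1$ telescopes to $\tfrac{1}{\delta_t} \ge \tfrac{1}{\delta_0} + \tfrac{t}{2LR_1^2} \ge \tfrac{t}{2LR_1^2}$, which is exactly~\eqref{eq:smooth_converge_steep}.

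The argument is otherwise routine; the only points that need care rather than cleverness are the boundary cases. If $X^\star = \emptyset$ or $R_1 = \infty$ the claim is vacuous, so one assumes $X^\star \ne \emptyset$ and $R_1 < \infty$; and if $\nabla f(\xv_t) = \0$ for some $t$ then $\xv_t \in X^\star$ and $\delta_s = 0$ for all $s \ge t$, so one may assume $\delta_t > 0$ throughout, which is what legitimizes the division steps in the recursion. No local quantity such as $\tau_{\rm SCD}(\xv_t)$ survives in the final bound: exactly as in the strongly convex analysis of~\citet{nutini2015coordinate}, the dependence on the a-priori-unknown sequence of iterates is entirely absorbed into the single constant $R_1$.
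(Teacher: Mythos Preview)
Your proof is correct and follows essentially the same route as the paper: the descent inequality from coordinate-wise smoothness plus the GS rule, convexity combined with H\"older to lower-bound $\norm{\nabla f(\xv_t)}_\infty$ by $\delta_t/R_1$, and the standard $1/\delta_{t+1} - 1/\delta_t \ge 1/(2LR_1^2)$ telescoping. The only cosmetic difference is that the paper packages the first two steps into a separate lemma (Lemma~\ref{lem:smooth_steepest}) keeping $\norm{\xv_t - \xv^\star}_1$ explicit before bounding it by $R_1$, whereas you absorb the bound $\norm{\xv_t - \xv^\star}_1 \le R_1$ earlier; your treatment of the edge cases ($X^\star = \emptyset$, $R_1 = \infty$, $\delta_t = 0$) is in fact more careful than the paper's.
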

\begin{proof}
In the proof we first derive a lower bound on the one step progress (Lemma~\ref{lem:smooth_steepest}), similar to the analysis in~\cite{nesterov2012}.
The lower bound for the one step progress
of SCD can in each iteration differ up to a factor of $n$ from the analogous bound derived for UCD (similar as in~\eqref{eq-compareSU}). All details are given in Section~\ref{sec-proofs-for-smooth} in the appendix.
\end{proof}

Note that the $R_1$ is essentially the diameter of the level set at $f(\xv_0)$ measured in the $1$-norm. In the complexity estimate of UCD, $R_1^2$ in~\eqref{eq:smooth_converge_steep} is replaced by $nR_2^2$, where $R_2$ is the diameter of the level at $f(\xv_0)$ measured in the $2$-norm (cf.~\citet{nesterov2012,wright2015}). As in~\eqref{eq-compareSU} we observe with Cauchy-Schwarz
\begin{align}
\tfrac{1}{n} R_1^2 \leq R_2^2 \leq  R_1^2\,,
\end{align}
i.e. SCD can accelerate  up to a factor of $n$ over to UCD.

\subsection{Lower bounds}
\label{sec-lowerbound}
In the previous section we provided complexity estimates for the methods SCD and UCD and showed that SCD can converge up to a factor of the dimension $n$ faster than UCD. In this section we show that this analysis is tight. In Theorem~\ref{thm-lowerbound} below we give a function  $q \colon \R^n \to \R$, for which the one step progress  $\tau_{\rm SCD}(\xv_t) \approx  \tau_{\rm UCD}(\xv_t)$ up to a constant factor, for all iterates $\{\xv_t\}_{t \geq 0}$ generated by SCD. 

By a simple technique we can also construct functions for which the speedup is exactly equal to an arbitrary factor $\lambda \in [1,n]$. For instance we can consider functions with a (separable) low dimensional structure. Fix integers $s,n$ such that $\frac{n}{s} \approx \lambda$, define the function  $f \colon \R^n \to \R$ as
\begin{align}
f(\xv) := q (\pi_s (\xv)) \label{def-lowdim}
\end{align}
where $\pi_s$ denotes the projection to $\R^s$ (being the first $s$ out of $n$ coordinates) and $q \colon \R^s \to \R$ is the function from Theorem~\ref{thm-lowerbound}. Then
\begin{align}
\tau_{\rm SCD}(\xv_t) \approx {\lambda} \cdot \tau_{\rm UCD}(\xv_t)\,,
\end{align}
for all iterates $\{\xv_t\}_{t \geq 0}$ generated by SCD.

\begin{theorem}
\label{thm-lowerbound}
Consider the function $q(\xv) = \frac{1}{2} \dotprod{Q\xv,\xv}$ for $Q := I_n - \frac{99}{100n} J_n$, where $J_n = 1_n1_n^T$, $n>2$. Then there exists $\xv_0 \in R^n$ such that for the sequence $\{\xv_t\}_{t \geq 0}$ generated by SCD it holds
\begin{align}
 \norm{ \nabla q(\xv_t) }_\infty^2 \leq \tfrac{4}{n} \norm{ \nabla q(\xv_t) }_2^2\,. \label{eq-lowerboundratio}
\end{align}
\end{theorem}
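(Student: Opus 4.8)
The plan is to analyze the SCD dynamics on the quadratic $q(\xv)=\tfrac12\dotprod{Q\xv,\xv}$ explicitly. Observe that $\nabla q(\xv) = Q\xv$, and with step size $1/L$ where $L = \max_i L_i = \max_i Q_{ii} = 1 - \tfrac{99}{100n}$ (in fact $L$ is close to $1$), the SCD update is $\xv_{t+1} = \xv_t - \tfrac1L (Q\xv_t)_{i_t}\evv_{i_t}$ with $i_t = \argmax_i |(Q\xv_t)_i|$. The key structural fact is that $Q = I_n - \tfrac{99}{100n}J_n$ has eigenvalue $1$ on the hyperplane $\one^\perp$ (with multiplicity $n-1$) and eigenvalue $1 - \tfrac{99}{100} = \tfrac{1}{100}$ on $\spn\{\one\}$. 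So $q$ is very well-conditioned in every coordinate direction but has one "flat" direction along $\one$. First I would choose $\xv_0$ so that the gradient $\nabla q(\xv_0) = Q\xv_0$ is (nearly) a constant multiple of $\one$ — e.g. take $\xv_0 = \one$, giving $\nabla q(\xv_0) = Q\one = \tfrac{1}{100}\one$, so all coordinates of the gradient are equal. The goal is to show this near-balanced structure of the gradient is preserved along the SCD trajectory, which would immediately give $\norm{\nabla q(\xv_t)}_\infty^2 \approx \tfrac1n\norm{\nabla q(\xv_t)}_2^2$ up to the claimed factor $4$.

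The main work is to prove that the gradient stays approximately balanced. Write $\gv_t := \nabla q(\xv_t) = Q\xv_t$. The update in gradient space is $\gv_{t+1} = \gv_t - \tfrac1L [\gv_t]_{i_t} Q\evv_{i_t}$, and $Q\evv_{i} = \evv_i - \tfrac{99}{100n}\one$. So $\gv_{t+1} = \gv_t - \tfrac1L[\gv_t]_{i_t}\evv_{i_t} + \tfrac{99}{100nL}[\gv_t]_{i_t}\one$. The first two terms zero out (or nearly zero out) the $i_t$-th entry — since $\tfrac1L Q_{i_ti_t} = 1$ — while the last term adds a small equal amount to every coordinate. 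I would track the vector $\gv_t$ by decomposing it as $\gv_t = c_t\one + \rv_t$ with $\rv_t \perp \one$ (or simply track the coordinates directly). The claim to establish by induction is that $\norm{\rv_t}_\infty$ stays small relative to $|c_t|$, say $\norm{\rv_t}_\infty \le |c_t|$ or some similar ratio sufficient to yield the constant $4$ in \eqref{eq-lowerboundratio}. At each step, the "off-$\one$" component $\rv_t$ loses its $i_t$-th coordinate (the largest in magnitude) and is otherwise unchanged, so $\norm{\rv_{t+1}}_\infty \le \norm{\rv_t}_\infty$ and in fact the energy $\norm{\rv_t}_2^2$ is non-increasing; meanwhile the coefficient $c_t$ along $\one$ is damped only by the factor $(1 - \tfrac{99}{100n}/L)$ per step — extremely slowly. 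This separation of timescales is the heart of the argument.

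The main obstacle I anticipate is making the induction clean: one must verify that after zeroing coordinate $i_t$ and adding the uniform shift, the residual $\rv_{t+1}$ really does not develop a large spike in some other coordinate, and that $c_t$ does not decay so fast that $\rv_t$ becomes comparatively large. Concretely, the quantity $\tfrac{99}{100nL}[\gv_t]_{i_t}$ added uniformly is bounded by $\tfrac{1}{n}(|c_t| + \norm{\rv_t}_\infty)$ in magnitude, so it perturbs $c_{t+1}$ and $\rv_{t+1}$ only at the $1/n$ scale, while $c_t$ itself is of order $|c_0|\cdot(1-O(1/n))^t$; for $t = O(n)$ iterations this keeps everything in balance, and one should argue the bound $\eqref{eq-lowerboundratio}$ for all $t\ge 0$ by noting that $c_t \to 0$ and $\rv_t \to 0$ together (or handle large $t$ by a separate limiting argument — once $\gv_t$ is small the bound is scale-invariant so only the direction matters). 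I would also double-check the exact value of $L$ and the constant bookkeeping — the "$4$" in the statement presumably gives slack to absorb the $O(1/n)$ perturbations and the $(n-1)/n$ versus $1/n$ discrepancies — so the final step is a careful but routine constant chase. Full details would go to the appendix.
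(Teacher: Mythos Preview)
Your decomposition $\gv_t = c_t\one + \rv_t$ is a reasonable starting idea, but the claimed dynamics of $\rv_t$ are incorrect, and this is not a detail that the ``constant chase'' will absorb. From the update $\gv_{t+1} = \gv_t - \tfrac{1}{L}[\gv_t]_{i_t}\evv_{i_t} + \tfrac{99}{100nL}[\gv_t]_{i_t}\one$ and the orthogonal splitting $\evv_{i_t} = \tfrac{1}{n}\one + (\evv_{i_t}-\tfrac{1}{n}\one)$ one gets
\[
\rv_{t+1} \;=\; \rv_t \;-\; \tfrac{1}{L}\bigl(c_t + [\rv_t]_{i_t}\bigr)\bigl(\evv_{i_t}-\tfrac{1}{n}\one\bigr),
\]
so $\rv_{t+1}$ does \emph{not} simply drop its $i_t$-th coordinate; it receives a perturbation of size $\Theta(c_t)$ in the $i_t$-th direction. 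In particular your claim $\norm{\rv_{t+1}}_\infty \le \norm{\rv_t}_\infty$ is false already at $t=0$ for your own choice $\xv_0 = \one$ (where $\rv_0 = 0$ but $\rv_1 \neq 0$), and the stronger invariant $\norm{\rv_t}_\infty \le |c_t|$ holds with \emph{equality} at $t=1$ and fails at $t=2$ for $n=3$. The ``separation of timescales'' intuition ($c_t$ decays like $(1-\Theta(1/n))^t$, $\rv_t$ does not grow) is not substantiated once you see that $\rv_t$ is fed by $c_t$ at every step at order one, not order $1/n$. There is also a tie-breaking issue: with $\xv_0=\one$ every coordinate of $\gv_0$ is equal, so the GS rule is not uniquely defined and the phrase ``the sequence generated by SCD'' is ambiguous.

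The paper takes a completely different and much cleaner route: it constructs an \emph{exact self-similar orbit}. For the family $Q = I_n + \tfrac{\alpha-1}{n}J_n$ (here $\alpha = 0.01$) it defines a constant $c_\alpha \in (0,1)$ via a geometric-sum identity and sets $[\xv_0]_i = c_\alpha^{\,i-1}$. One then checks by direct computation that SCD cycles deterministically through the coordinates $1,2,\dots,n,1,2,\dots$ and that each full cycle multiplies $\xv$ by $c_\alpha^n$. Consequently every $\nabla q(\xv_t)$ is, up to a coordinate permutation and a scalar, equal to $\nabla q(\xv_0)$, so the ratio $\norm{\nabla q(\xv_t)}_\infty^2 / \norm{\nabla q(\xv_t)}_2^2$ is literally constant in $t$ and needs to be computed only once. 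No induction, no perturbation tracking, and no tie-breaking. If you want to rescue the perturbative approach you would have to identify an invariant that is actually preserved by the true $\rv$-update above and that is tight enough to yield the factor~$4$ for all $n>2$; the paper's remark (and Figure~\ref{fig:lb}) that generic starting points empirically converge to the cycling pattern suggests this is possible but nontrivial to prove.
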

\begin{proof}
In the appendix we discuss a family of functions defined by matrices $Q := (\alpha -1)\frac{1}{n} J_n + I_n$ and define corresponding parameters $0 < c_\alpha < 1$ such that for $\xv_0$ defined as $[\xv_0]_i = c_\alpha^{i-1}$ for $i=1,\dots,n$, SCD cycles through the coordinates, that is, the sequence $\{\xv_t\}_{t \geq 0}$ generated by SCD satisfies 
\begin{align}
[\xv_t]_{1 + (t-1 \mod n)} = c_\alpha^n \cdot [\xv_{t-1}]_{1 + (t-1 \mod n)}\,. 
\end{align}
We verify that for this sequence property~\eqref{eq-lowerboundratio} holds.
\end{proof}

\subsection{Composite Functions}
\label{sec-compositeintro}
The generalization of the GS rule~\eqref{eq:scgd} to composite problems~\eqref{def-composite} with nontrival $\Psi$ is not straight forward. The `steepest' direction is not always meaningful in this setting; consider for instance a constrained problem where this rule could yield no progress at all when stuck at the boundary.

\citet{nutini2015coordinate} discuss several generalizations of the Gauss-Southwell rule for composite functions.
The GS-s rule is defined to choose the coordinate
with the most negative directional derivative \cite{wu2008}. This rule is identical to~\eqref{eq:scgd} but requires the calculation of subgradients of $\Psi_i$. 
However, the length of a step could be arbitrarily small. In contrast, the GS-r rule was defined to pick the coordinate direction that yields the longest step \cite{Tseng2009}. The rule that enjoys the best theoretical properties (cf. \citet{nutini2015coordinate}) is the GS-q rule, which is defined as to maximize the progress assuming a quadratic upper bound on $f$ \cite{Tseng2009}. Consider the coordinate-wise models
\begin{align}
 V_i(\xv,y,s) := sy + \tfrac{L}{2}y^2 + \Psi_i([\xv]_i+y) \,, \label{def-V}
\end{align}
for $i\in [n]$. The GS-q rule is formally defined as
\begin{align}
 i_{\rm GS-q} = \argmin_{i \in [n]} \min_{y \in \R} V_i(\xv,y, \nabla_i f(\xv)) \,. \label{def-GSq}
\end{align}

\subsection{The Complexity of the GS rule}
\label{sec-complexity}
So far we only studied the iteration complexity of SCD, but we have disregarded the fact that the computation of the GS rule~\eqref{eq:scgd} can be as expensive as the computation of the whole gradient. The application of coordinate descent methods is only justified if the complexity to compute one directional derivative is approximately $n$ times cheaper than the computation of the full gradient vector (cf.~\citet{nesterov2012}). By Theorem~\ref{thm-lowerbound} this reasoning also applies to SCD. A class of function with this property is given by functions $F \colon \R^n \to  \R$
\begin{align}
 F(\xv) :=  f(A\xv) +  \sum_{i=1}^n \Psi_i([\xv]_i) \label{eq:primalS}
\end{align}
where $A$ is a $d \times n$ matrix, and where $f \colon \R^d \to \R$, and $\Psi_i \colon \R^n \to \R$ are convex and simple, that is the time complexity $T$ for computing their gradients is linear: $T(\nabla_{\yv} f(\yv),\nabla_{\xv} \Psi(\xv) = O(d + n)$. 
This class of functions includes least squares, logistic regression, Lasso, and SVMs (when solved in dual form).

Assuming the matrix is dense, the complexity to compute the full gradient of $F$ is $T(\nabla_{\xv}F(\xv)) = O(dn)$. If the value $\wv=A\xv$ is already computed, one directional derivative can be computed in time $T(\nabla_{i} F(\xv)) = O(d)$. The recursive update of $\wv$ after one step needs the addition of one column of matrix $A$ with some factors and can be done in time $O(d)$. However, we note that recursively updating the full gradient vector takes time $O(dn)$ and consequently the computation of the GS rule \emph{cannot} be done efficiently.

\citet{nutini2015coordinate} consider sparse matrices, for which the computation of the Gauss-Southwell rule becomes traceable. In this paper, we propose an alternative approach. Instead of updating the exact gradient vector, we keep track of an approximation of the gradient vector and recursively update this approximation in time $O(n \log n)$. With these updates, the use of coordinate descent is still justified in case $d = \Omega(n)$.

\section{Algorithm}
\label{sec:algorithm}
Is it possible to get the significantly improved convergence speed from SCD, when one is only willing to pay the computational cost of only the much simpler UCD? In this section, we give a formal definition of our proposed approximate SCD method which we denote ASCD. 

The core idea of the algorithm is the following: While performing coordinate updates, ideally we would like to efficiently track the evolution of \emph{all} elements of the gradient, not only the one coordinate which is updated in the current step.
The formal definition of the method is given in Algorithm~\ref{alg-1} for smooth objective functions. 
In each iteration, only one coordinate is modified according to some arbitrary update rule $\mathcal{M}$.
The coordinate update rule $\mathcal{M}$ provides two things: First the new iterate $\xv_{t+1}$, and secondly also an estimate $\tilde g$ of the $i_t$-th entry of the gradient at the new iterate%
\footnote{For instance, for updates by exact coordinate optimization (line-search), we have $\tilde g=r=0$.}. 
Formally,
\begin{align}
(\xv_{t+1}, \tilde g, r) := \mathcal{M}(\xv_t, \nabla_{i_t} f(\xv_t)) \label{def-M}
\end{align}
such that the quality of the new gradient estimate $\tilde g$ satisfies 
\begin{align}
\abs{\nabla_{i_t} f(\xv_{t+1}) - \tilde g} \leq r\,.  \label{eq-errorbound}
\end{align}
The non-active coordinates are updated with the help of gradient oracles with accuracy $\delta \geq 0$ (see next subsection for details). 
The scenario of exact updates of all gradient entries is obtained for accuracy parameters $\delta=r=0$ and in this case ASCD is identical to SCD.

\begin{algorithm}[tb]
   \caption{Approximate SCD (ASCD)}
   \label{alg-1}
\renewcommand{\algorithmiccomment}[1]{\hfill {\footnotesize\textit{#1}} \null}   
\begin{algorithmic}
   \STATE {\bfseries Input:} $f$, $\xv_0$, $T$, $\delta$-gradient oracle $g$, method $\mathcal{M}$
   \STATE Initialize $[\gtv_0]_i = 0$, $[\rv_0]_i = \infty$ for $i\in[n]$. 
   \FOR{$t=0$ {\bfseries to} $T$}
   \STATE For $i \in [n]$ define \COMMENT{compute u.-and l.-bounds}
   \STATE $[\uv_t]_i := \max \{ \abs{ [\gtv_t]_i - [\rv_t]_i } , \abs{ [\gtv_t]_i + [\rv_t]_i} \}$
   \STATE $[\ellv_t]_i := \min_{y \in \R } \{\abs{y} \mid [\gtv_t]_i - [\rv_t]_i \leq y \leq [\gtv_t]_i + [\rv_t]_i \}$\\[1ex]
   \STATE $\operatorname{av}({\mathcal{I}}) := \frac{1}{\abs{\mathcal{I}}} \sum_{i \in \mathcal{I}} [\ellv_t]_i^2$ \COMMENT{compute active set}
   \STATE $\mathcal{I}_t := \argmin_{\mathcal{I}} \abs{\left\{\mathcal{I} \subseteq [n] \mid [\uv_{t}]_i^2 < \operatorname{av}({\mathcal{I}}), \forall i \notin \mathcal{I}\right\}}$\\[1ex]
   \STATE Pick $i_t \in_{\rm u.a.r.} \argmax_{i \in \mathcal{I}_t} \{[\ellv]_i\}$ \COMMENT{active coordinate}
   \STATE  \hfill $(\xv_{t+1},[\gtv_{t+1}]_{i_t},[\rv_{t+1}]_{i_t}) := \mathcal{M}(\xv_t,\nabla_{i_t}f(\xv_t))$ \hfill\null \\[1ex]
   \STATE $\gamma_t := [\xv_{t+1}]_{i_t} - [\xv_t]_{i_t}$ \COMMENT{update $\nabla f(\xv_{t+1})$ estimate}
   \STATE Update $[\gtv_{t+1}]_j := [\gtv_{t}]_j + \gamma_t g_{i_t j}(\xv_t)$, $j\neq i_t$
   \STATE Update $[\rv_{t+1}]_{j} := [\rv_{t}]_{j} + \gamma_t \delta_{i_t j}$, $j\neq i_t$
   \ENDFOR
\end{algorithmic}
\end{algorithm}

\subsection{Safe bounds for gradient evolution}
ASCD maintains lower and upper bounds for the absolute values of each component of the gradient 
($[\ellv]_i \leq \abs{\nabla_i f(\xv)} \leq [\uv]_i$).
These bounds allow to identify the coordinates on which the absolute values of the gradient are small  (and hence cannot be the steepest one).
More precisely, the algorithm maintains a set $\mathcal{I}_t$ of active coordinates (similar in spirit as in active set methods, see e.g. \citet{Kim:2008,wen:2012}). A coordinate $j$ is excluded from $\mathcal{I}_t$ if the estimated progress in this direction (cf.~\eqref{eq-quadONESTEP}) is lower than the average of the estimated progress along coordinate directions in $\mathcal{I}_t$, $[\uv_t]_j^2 < \frac{1}{\abs{\mathcal{I}_t}}\sum_{i \in \mathcal{I}_t} [\ellv_t]_i^2$. The active set $\mathcal{I}_t$ %
can be computed in $O(n \log n)$ time by sorting. All other operations take linear $O(n)$ time.

\paragraph{Gradient Oracle.}
The selection mechanism in ASCD 
crucially relies on the following definition of a $\delta$-gradient oracle. While the update $\mathcal{M}$ delivers the estimated active entry of the new gradient, the additional gradient oracle is used to update all other coordinates $j \ne i_t$ of the gradient; as in the last two lines of Algorithm~\ref{alg-1}.

\begin{definition}[$\delta$-gradient oracle]
For a function $f \colon \R^n \to \R$ and indices $i,j \in [n]$, a $(i,j)$-gradient oracle with error $\delta_{ij} \geq 0$ is a function $g_{ij}\colon \R^n \to \R$ satisfying $\forall \xv \in \R^n, \forall \gamma \in \R$:
\begin{align}
\abs{ \nabla_j f(\xv + \gamma \evv_i) - \gamma g_{ij}(\xv) } &\leq \abs{\gamma} \delta_{ij}\,. 
\label{def-GO}
\end{align}
We denote by a $\delta$-gradient oracle a family $\{g_{ij}\}_{i,j \in [n]}$ of $\delta_{ij}$-gradient oracles.
\end{definition}

We discuss the availability of good gradient oracles for many problem classes in more detail in Section \ref{sec:gradUpdates}.
For example for least squares problems and general linear models
, a $\delta$-gradient oracle is for instance given by a scalar product estimator as in~\eqref{def-ApproxScalar} below. %
Note that ASCD can also handle very bad estimates, as long as the property~\eqref{def-GO} is satisfied (possibly even with accuracy $\delta_{ij} = \infty$). 

\paragraph{Initialization.}
In ASCD the initial estimate $\gtv_0$ of the gradient is just arbitrarily set to $\bm{0}$, with uncertainty $\rv_0 = \infty$. Hence in the worst case it takes $\Theta(n \log n)$ iterations  until each coordinate gets picked at least once (cf. \citet{Dawkins:1991}) and until corresponding gradient estimates are set to a realistic value. If better estimates of the initial gradient are known, they can be used for the initialization as long as a strong error bound as in~\eqref{eq-errorbound} is known as well. For instance the initialization can be done with $\nabla f(\xv_0)$ if one is willing to compute this vector in one batch pass.

\paragraph{Convergence Rate Guarantee.}
We present our first main result showing that the performance of ASCD is provably between UCD and SCD. First observe
that if in Algorithm~\ref{alg-1} the gradient oracle is always exact, i.e. $\delta_{ij} \equiv 0$, and if $\gtv_0$ is initialized with $\nabla f(\xv_0)$, then in each iteration $\abs{\nabla_{i_t} f(\xv_t)} = \norm{\nabla f(\xv_t)}_\infty$ and ASCD identical to SCD. 
\begin{lemma} \label{lem-contained}
Let $i_{\rm max} := \argmax_{i \in [n]} \abs{\nabla_i f(\xv_t)}$. Then $i_{\rm max} \in \mathcal{I}_t$, for $\mathcal{I}_t$ as in Algorithm~\ref{alg-1}.
\end{lemma}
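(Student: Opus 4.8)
The plan is to show that the index $i_{\rm max}$ achieving $\abs{\nabla_{i_{\rm max}} f(\xv_t)} = \norm{\nabla f(\xv_t)}_\infty$ can never be one of the coordinates that the exclusion criterion removes from the active set. Recall the set $\mathcal{I}_t$ is chosen as the \emph{smallest} $\mathcal{I} \subseteq [n]$ such that $[\uv_t]_i^2 < \operatorname{av}(\mathcal{I})$ for every $i \notin \mathcal{I}$. So it suffices to prove: for \emph{any} candidate set $\mathcal{I}$ with $i_{\rm max} \notin \mathcal{I}$, the defining inequality $[\uv_t]_{i_{\rm max}}^2 < \operatorname{av}(\mathcal{I})$ must fail, i.e. $[\uv_t]_{i_{\rm max}}^2 \geq \operatorname{av}(\mathcal{I}) = \frac{1}{\abs{\mathcal{I}}}\sum_{i \in \mathcal{I}} [\ellv_t]_i^2$. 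This means no valid candidate set can exclude $i_{\rm max}$, hence $i_{\rm max} \in \mathcal{I}_t$.

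The key steps are as follows. First, I would invoke the validity of the safe bounds: by the construction of $[\uv_t]_i$, $[\ellv_t]_i$ in Algorithm~\ref{alg-1} together with the $\delta$-gradient oracle property~\eqref{def-GO} and the error bound~\eqref{eq-errorbound}, we have $[\ellv_t]_i \leq \abs{\nabla_i f(\xv_t)} \leq [\uv_t]_i$ for all $i \in [n]$ (this sandwiching is stated in the "Safe bounds" subsection; I would state it as the first ingredient, possibly as a one-line sub-claim, since it follows directly from the interval $[[\gtv_t]_i - [\rv_t]_i,\ [\gtv_t]_i + [\rv_t]_i]$ containing $\nabla_i f(\xv_t)$). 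Second, using the upper bound on $i_{\rm max}$ and the definition of $i_{\rm max}$, for every $i \in [n]$:
\begin{align}
[\uv_t]_{i_{\rm max}}^2 \geq \abs{\nabla_{i_{\rm max}} f(\xv_t)}^2 = \norm{\nabla f(\xv_t)}_\infty^2 \geq \abs{\nabla_i f(\xv_t)}^2 \geq [\ellv_t]_i^2\,. \nonumber
\end{align}
Third, averaging this over $i \in \mathcal{I}$ gives $[\uv_t]_{i_{\rm max}}^2 \geq \frac{1}{\abs{\mathcal{I}}}\sum_{i \in \mathcal{I}} [\ellv_t]_i^2 = \operatorname{av}(\mathcal{I})$, which contradicts the requirement $[\uv_t]_{i_{\rm max}}^2 < \operatorname{av}(\mathcal{I})$ that would have to hold if $i_{\rm max} \notin \mathcal{I}$. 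Hence $i_{\rm max} \in \mathcal{I}$ for every admissible $\mathcal{I}$, in particular for the minimizer $\mathcal{I}_t$.

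The argument is essentially bookkeeping once the safe-bounds sandwich $[\ellv_t]_i \leq \abs{\nabla_i f(\xv_t)} \leq [\uv_t]_i$ is in hand, so I do not expect a serious obstacle. The one point requiring a little care is making sure the exclusion criterion is read correctly — it removes $i$ only when its \emph{upper} bound squared is below the average of the \emph{lower} bounds squared over the retained set — and that the minimality of $\mathcal{I}_t$ is handled properly: one should argue that $i_{\rm max}$ belongs to \emph{every} feasible candidate $\mathcal{I}$, so it in particular survives in the smallest one. A minor edge case worth a remark is when the interval for coordinate $i$ straddles zero, so $[\ellv_t]_i = 0$; this only makes the inequality $[\uv_t]_{i_{\rm max}}^2 \geq [\ellv_t]_i^2$ easier, so it causes no trouble.
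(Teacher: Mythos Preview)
Your proposal is correct and follows essentially the same approach as the paper: both arguments use the safe-bound sandwich $[\ellv_t]_i \leq \abs{\nabla_i f(\xv_t)} \leq [\uv_t]_i$ together with the definition of $i_{\rm max}$ to rule out the exclusion condition. The only cosmetic difference is that the paper argues by contradiction via a single witness (if $i_{\rm max}\notin\mathcal{I}_t$ then some $j\in\mathcal{I}_t$ has $[\ellv_t]_j>[\uv_t]_{i_{\rm max}}$, forcing $\abs{\nabla_j f(\xv_t)}>\abs{\nabla_{i_{\rm max}} f(\xv_t)}$), whereas you bound the whole average $\operatorname{av}(\mathcal{I})$ directly; these are equivalent.
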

\begin{proof}
This is immediate from the definitions of $\mathcal{I}_t$ and the upper and lower bounds. Suppose $i_{\rm max} \notin \mathcal{I}_t$, then there exists $j \neq i_{\rm max}$ such that $[\ell_t]_j > [u_t]_{i_{\rm max}}$, and consequently $\abs{\nabla_j f(\xv_t)} > \abs{\nabla_{i_{\rm max}} f(\xv_t)}$.
\end{proof}

\begin{theorem}
\label{thm-sandwich}
Let $f \colon \R^n \to \R$ be convex and coordinate-wise $L$-smooth, let $\tau_{\rm UCD},\tau_{\rm SCD},\tau_{\rm ASCD}$ denote the expected one step progress~\eqref{def-ONESTEP} of UCD, SCD and ASCD, respectively, 
and suppose all methods use the same step-size rule~$\mathcal{M}$. Then
\begin{align}
 \tau_{\rm UCD}(\xv) \leq \tau_{\rm ASCD}(\xv)\leq \tau_{\rm SCD}(\xv) \quad \forall \xv \in \R^n\,. \label{eq-sandwich}
\end{align}
\end{theorem}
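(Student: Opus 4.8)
The plan is to prove the sandwich bound by comparing the one-step progress of ASCD directly against that of UCD and SCD, coordinate by coordinate. Recall from~\eqref{eq-quadONESTEP} that for any selection rule the expected one-step progress, when all methods share the same update $\mathcal{M}$, is governed by $\EE{i_t}{\tfrac{1}{2L}\abs{\nabla_{i_t}f(\xv)}^2}$, so it suffices to show
\begin{align}
 \tfrac{1}{n}\norm{\nabla f(\xv)}_2^2 \leq \EE{i_t \sim \text{ASCD}}{\abs{\nabla_{i_t}f(\xv)}^2} \leq \norm{\nabla f(\xv)}_\infty^2 \,. \label{eq-plan-key}
\end{align}
The right-hand inequality is trivial: ASCD always picks \emph{some} coordinate, and every coordinate has $\abs{\nabla_{i}f(\xv)}^2 \leq \norm{\nabla f(\xv)}_\infty^2$, so the expectation over any distribution on $[n]$ is at most $\norm{\nabla f(\xv)}_\infty^2 = 2L\,\tau_{\rm SCD}(\xv)$. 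This gives $\tau_{\rm ASCD}(\xv) \leq \tau_{\rm SCD}(\xv)$ with no reference to $\mathcal{I}_t$ at all.

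The left-hand inequality is where the structure of the active set $\mathcal{I}_t$ enters. First I would reduce to the worst case $\delta_{ij}\equiv\infty$, i.e. the case where $[\rv_t]_i=\infty$ for every coordinate that has not yet been sampled; sharper bounds $r$ only shrink $\mathcal{I}_t$ towards the true argmax and can only help (this monotonicity should be stated as a short remark, though strictly I only need the worst case to get the UCD lower bound). The key claim is that for ASCD's selection, $\EE{i_t}{\abs{\nabla_{i_t}f(\xv)}^2} \geq \operatorname{av}(\mathcal{I}_t) \geq \tfrac{1}{n}\norm{\nabla f(\xv)}_2^2$. For the first of these: ASCD picks $i_t$ uniformly among the maximizers of $[\ellv]_i$ over $\mathcal{I}_t$, and $[\ellv_t]_i \leq \abs{\nabla_i f(\xv)}$ always, so $\abs{\nabla_{i_t}f(\xv)}^2 \geq \max_{i\in\mathcal{I}_t}[\ellv_t]_i^2 \geq \operatorname{av}(\mathcal{I}_t)$ deterministically — the max of the squared lower bounds dominates their average. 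For the second: by the definition of $\mathcal{I}_t$, every excluded coordinate $j\notin\mathcal{I}_t$ satisfies $[\uv_t]_j^2 < \operatorname{av}(\mathcal{I}_t)$, hence $\abs{\nabla_j f(\xv)}^2 \leq [\uv_t]_j^2 < \operatorname{av}(\mathcal{I}_t)$; combining with $\abs{\nabla_i f(\xv)}^2 \geq [\ellv_t]_i^2$ for $i\in\mathcal{I}_t$, one gets
\begin{align}
 \norm{\nabla f(\xv)}_2^2 &= \sum_{i\in\mathcal{I}_t}\abs{\nabla_i f(\xv)}^2 + \sum_{j\notin\mathcal{I}_t}\abs{\nabla_j f(\xv)}^2 \\
 &\leq n \max_{i\in\mathcal{I}_t}\abs{\nabla_i f(\xv)}^2 \,, \notag
\end{align}
or more carefully $\norm{\nabla f(\xv)}_2^2 \leq \abs{\mathcal{I}_t}\max_{i\in\mathcal{I}_t}\abs{\nabla_i f(\xv)}^2 + (n-\abs{\mathcal{I}_t})\operatorname{av}(\mathcal{I}_t) \leq n\max_{i\in\mathcal{I}_t}\abs{\nabla_i f(\xv)}^2$, since $\operatorname{av}(\mathcal{I}_t)\leq\max_{i\in\mathcal{I}_t}[\ellv_t]_i^2\leq\max_{i\in\mathcal{I}_t}\abs{\nabla_i f(\xv)}^2$. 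Dividing by $n$ and using $\abs{\nabla_{i_t}f(\xv)}^2\geq\max_{i\in\mathcal{I}_t}[\ellv_t]_i^2$ on the event-free (deterministic) level closes the loop.

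The main obstacle I anticipate is bookkeeping around the lower bounds $[\ellv_t]_i$ versus the true $\abs{\nabla_i f(\xv)}$: I need $\abs{\nabla_{i_t}f(\xv)}^2 \geq \operatorname{av}(\mathcal{I}_t)$ to hold for the \emph{actually realized} $i_t$, which is fine because $i_t$ maximizes $[\ellv_t]_i$ over $\mathcal{I}_t$ and $\max_{i}[\ellv_t]_i^2$ is always at least the average $\operatorname{av}(\mathcal{I}_t)$ — but one must be careful that $\mathcal{I}_t$ is chosen as a \emph{minimal} such set, so I should check the chain $\operatorname{av}(\mathcal{I}_t)\leq\max_{i\in\mathcal{I}_t}[\ellv_t]_i^2$ does not degenerate (it cannot, as a max dominates an average over a finite nonempty set, and $\mathcal{I}_t\neq\emptyset$ since by Lemma~\ref{lem-contained} it contains $i_{\rm max}$). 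The rest is a clean two-line inequality for each side of~\eqref{eq-plan-key}, after which multiplying through by $\tfrac{1}{2L}$ and invoking~\eqref{eq-quadONESTEP} with the shared $\mathcal{M}$ yields~\eqref{eq-sandwich}.
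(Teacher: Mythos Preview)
Your argument for the lower bound has a genuine gap. The chain you propose is
\[
\EE{i_t}{\abs{\nabla_{i_t}f(\xv)}^2}\ \geq\ \operatorname{av}(\mathcal{I}_t)\ \geq\ \tfrac{1}{n}\norm{\nabla f(\xv)}_2^2,
\]
but the second inequality is false in general: $\operatorname{av}(\mathcal{I}_t)=\tfrac{1}{\abs{\mathcal{I}_t}}\sum_{i\in\mathcal{I}_t}[\ellv_t]_i^2$ involves the \emph{lower bounds} $[\ellv_t]_i$, not the true gradient entries, so it can be arbitrarily small compared to $\tfrac{1}{n}\norm{\nabla f(\xv)}_2^2$. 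Your attempted repair --- bounding $\norm{\nabla f(\xv)}_2^2\leq n\max_{i\in\mathcal{I}_t}\abs{\nabla_i f(\xv)}^2$ and then invoking $\abs{\nabla_{i_t}f(\xv)}^2\geq\max_{i\in\mathcal{I}_t}[\ellv_t]_i^2$ --- does not close the loop either, because $\max_{i\in\mathcal{I}_t}\abs{\nabla_i f(\xv)}^2$ and $\max_{i\in\mathcal{I}_t}[\ellv_t]_i^2$ are different quantities and the first can be much larger. Concretely: take $\mathcal{I}_t=\{1,2\}$, $\abs{\nabla_1 f}=0.5$, $\abs{\nabla_2 f}=10$, $[\ellv_t]_1=0.5$, $[\ellv_t]_2=0$; then the argmax rule selects $i_t=1$, giving $\abs{\nabla_{i_t}f}^2=0.25$, while $\tfrac{1}{n}\norm{\nabla f}_2^2\approx 50$.

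The paper's proof avoids this by interpreting $\tau_{\rm ASCD}(\xv)$ as the \emph{uniform} average over $\mathcal{I}_t$,
\[
\tau_{\rm ASCD}(\xv)=\tfrac{1}{2L\abs{\mathcal{I}_t}}\sum_{i\in\mathcal{I}_t}\abs{\nabla_i f(\xv)}^2,
\]
and then uses only one clean inequality: for every excluded $j\notin\mathcal{I}_t$,
\[
\abs{\nabla_j f(\xv)}^2\leq[\uv_t]_j^2<\operatorname{av}(\mathcal{I}_t)=\tfrac{1}{\abs{\mathcal{I}_t}}\sum_{i\in\mathcal{I}_t}[\ellv_t]_i^2\leq\tfrac{1}{\abs{\mathcal{I}_t}}\sum_{i\in\mathcal{I}_t}\abs{\nabla_i f(\xv)}^2.
\]
Thus every discarded coordinate has squared gradient below the average of the \emph{true} squared gradients over $\mathcal{I}_t$, so augmenting $\mathcal{I}_t$ back to $[n]$ can only lower the average; this gives $\tau_{\rm UCD}\leq\tau_{\rm ASCD}$ in one line. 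Note the pivot is $\tfrac{1}{\abs{\mathcal{I}_t}}\sum_{i\in\mathcal{I}_t}\abs{\nabla_i f(\xv)}^2$, not $\operatorname{av}(\mathcal{I}_t)$ or $\max_{i}\abs{\nabla_i f(\xv)}^2$ --- that is exactly the quantity your argument never isolates. (There is admittedly a tension between this uniform reading and the ``$\argmax$'' line in Algorithm~\ref{alg-1}; the proof in the paper is written for uniform sampling from $\mathcal{I}_t$, and your counterexample above shows the inequality can fail for the pure argmax rule.)
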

\begin{proof}
By~\eqref{eq-quadONESTEP} we 
get $\tau_{\rm ASCD}(\xv) = \frac{1}{2L \abs{\mathcal{I}}} \sum_{i \in \mathcal{I}} \abs{\nabla_i f(\xv)}^2$, where $\mathcal{I}$ denotes the corresponding index set of ASCD when at iterate $\xv$. 
Note that for $j \notin \mathcal{I}$ it must hold that $\abs{\nabla_j f(\xv)}^2 \leq [\uv]_j^2 < \frac{1}{\abs{\mathcal{I}}} \sum_{i \in \mathcal{I}} [\ellv]_i^2 \leq  \frac{1}{ \abs{\mathcal{I}}} \sum_{i \in \mathcal{I}} \abs{\nabla_i f(\xv)}^2$ by definition of $\mathcal{I}$. 
\end{proof}

Observe that the above theorem holds for all gradient oracles and coordinate update variants, as long as they are used with corresponding quality parameters $r$ (as in \eqref{eq-errorbound}) and $\delta_{ij}$ (as in \eqref{def-GO}) as part of the algorithm.

\paragraph{Heuristic variants.}
Below also propose three heuristic variants of ASCD. For all these variants the active set $\mathcal{I}_t$
can be computed $O(n)$, but the statement of Theorem~\ref{thm-sandwich} does not apply. These variants only differ from ASCD in the choice of the active set in Algorithm~\ref{alg-1}:
\CompactEnumerate
\setlength{\itemindent}{15pt}
\item[u-ASCD:] $\mathcal{I}_t := \argmax_{i \in [n]} [\uv_t]_i$
\item[$\ell$-ASCD:] $\mathcal{I}_t := \argmax_{i \in [n]} [\ellv_t]_i$
\item[a-ASCD:] $\mathcal{I}_t := \left\{i \in [n] \mid [\uv_{t}]_i \geq \max_{i \in [n]} [\ellv_{t}]_i \right\}$
\end{list}

\section{Approximate Gradient Update}   
\label{sec:gradUpdates}
In this section we argue that for a large class of objective functions
of interest in machine learning, the change in the gradient along every coordinate direction can be estimated efficiently. 

\begin{lemma} \label{lem:gen_grad_update}
Consider $F \colon \R^n \to \R$ as in~\eqref{eq:primalS}
with twice-differentiable $f \colon \R^d \to \R$. Then for two iterates $\xv_t, \xv_{t+1} \in \R^n$ of a coordinate descent algorithm, i.e.  $\xv_{t+1} = \xv_t 
+ \gamma_t \evv_{i_t}$, there exists a $\tilde{\xv} \in \R^n$ on the line segment between $\xv_t$ and $\xv_{t+1}$, $\tilde{\xv} \in [\xv_t, \xv_{t+1}] $ with
\begin{align}
\nabla_i F(\xv_{t+1}) - \nabla_i F(\xv_t) = \gamma_t  \dotprod{  \av_i, \nabla^2 f(A \tilde{\xv} ) \av_{i_t}}  \quad \forall i \neq i_t \label{eq:approx_general}
\end{align} 
\vskip-20pt
where $\av_i$ denotes the $i$-th column of the matrix $A$. 
\end{lemma}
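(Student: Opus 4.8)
The plan is to reduce~\eqref{eq:approx_general} to a one-dimensional mean value theorem applied to the smooth term $f(A\xv)$, after first checking that the nonsmooth part contributes nothing along the coordinates $i \neq i_t$.

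First I would note that the update $\xv_{t+1} = \xv_t + \gamma_t\evv_{i_t}$ changes only coordinate $i_t$, so $[\xv_t]_i = [\xv_{t+1}]_i$ for all $i \neq i_t$. Since $\Psi = \sum_j \Psi_j([\cdot]_j)$ is separable, the contribution of $\Psi$ to the $i$-th partial (sub)gradient is then identical at $\xv_t$ and at $\xv_{t+1}$ whenever $i \neq i_t$; equivalently, one may simply read $\nabla_i F$ as the $i$-th partial of the smooth part. Combining this with $\nabla(f\circ A)(\xv) = A^\transpose\nabla f(A\xv)$, whose $i$-th entry is $\dotprod{\av_i, \nabla f(A\xv)}$, and with $A\xv_{t+1} = A\xv_t + \gamma_t\av_{i_t}$, gives for every $i \neq i_t$
\begin{align}
 \nabla_i F(\xv_{t+1}) - \nabla_i F(\xv_t) = \dotprod{\av_i,\ \nabla f(A\xv_t + \gamma_t\av_{i_t}) - \nabla f(A\xv_t)}\,. \notag
\end{align}

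Next, fixing $i \neq i_t$, I would introduce the scalar function $\phi\colon[0,1]\to\R$, $\phi(s) := \dotprod{\av_i,\ \nabla f(A\xv_t + s\gamma_t\av_{i_t})}$. Since $f$ is twice differentiable, $\phi$ is continuous on $[0,1]$ and differentiable on $(0,1)$ with $\phi'(s) = \gamma_t\dotprod{\av_i,\ \nabla^2 f(A\xv_t + s\gamma_t\av_{i_t})\av_{i_t}}$ by the chain rule, and $\phi(1)-\phi(0)$ is exactly the right-hand side of the last display. The mean value theorem then yields $\bar s\in(0,1)$ with $\phi(1)-\phi(0) = \phi'(\bar s)$; taking $\tilde\xv := \xv_t + \bar s\gamma_t\evv_{i_t} \in [\xv_t,\xv_{t+1}]$, so that $A\tilde\xv = A\xv_t + \bar s\gamma_t\av_{i_t}$, turns this into~\eqref{eq:approx_general}.

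I do not expect a real difficulty in the calculus part; the one thing to be careful about is the quantifier order. The mean value theorem produces $\bar s$, hence $\tilde\xv$, depending on $i$, whereas the statement asks for one $\tilde\xv$ valid for all $i \neq i_t$ at once — a uniform point would require a vector-valued mean value theorem, which fails in general. I would therefore state the lemma with $\tilde\xv = \tilde\xv_i$, or else replace the pointwise Hessian by its integral average, $\nabla_i F(\xv_{t+1})-\nabla_i F(\xv_t) = \gamma_t\int_0^1 \dotprod{\av_i,\ \nabla^2 f(A\xv_t + s\gamma_t\av_{i_t})\av_{i_t}}\,\mathrm{d}s$. This distinction is harmless for the use made of the lemma in Section~\ref{sec:gradUpdates}: there one only needs a bound on $\abs{\dotprod{\av_i, \nabla^2 f(\cdot)\av_{i_t}}}$ that is uniform over the segment (it is constant for least squares, and bounded via the operator norm of $\nabla^2 f$ for generalized linear models), which directly supplies the error $\delta_{i i_t}$ of the $\delta$-gradient oracle in~\eqref{def-GO}.
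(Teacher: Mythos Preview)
Your approach is essentially the same as the paper's: observe that the separable term $\Psi$ does not change on passive coordinates, rewrite $\nabla_i F(\xv_{t+1})-\nabla_i F(\xv_t)$ as $\dotprod{\av_i,\nabla f(A\xv_{t+1})-\nabla f(A\xv_t)}$, and apply the scalar mean value theorem along the segment. Your remark about the quantifier order is in fact more careful than the paper's own proof, which invokes the mean value theorem without noting that the resulting $\tilde{\xv}$ may depend on $i$; as you correctly point out, this dependence is harmless for the uses in Section~\ref{sec:gradUpdates}, where only a uniform bound on $\dotprod{\av_i,\nabla^2 f(\cdot)\av_{i_t}}$ over the segment is needed.
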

\begin{proof}
For coordinates $i \neq i_t$ the gradient (or subgradient set) of $\Psi_i([\xv_t]_i)$  does not change. Hence it suffices to calculate the change $\nabla f(\xv_{t+1}) - \nabla f(\xv_t)$. This is detailed in the appendix.
\end{proof}

\paragraph{Least-Squares with Arbitrary Regularizers.} The least squares problem is defined as problem~\eqref{eq:primalS} with $f(A\xv) = \frac{1}{2}\norm{A\xv - \bv }_2^2$ for a $\bv \in \R^d$. This function is twice differentiable with $\nabla^2 f(A\xv) = I_n$. Hence~\eqref{eq:approx_general} reduces to
\begin{align}
\nabla_i F(\xv_{t+1}) - \nabla_i F(\xv_t) = \gamma_t \dotprod{\av_i, \av_{i_t}}  \quad \forall i \neq i_t \,. \label{eq:exact_leastsq}
\end{align}

This formulation gives rise to various gradient oracles~\eqref{def-GO} for the least square problems. For  for $i \neq i_t$ we easily verify that the condition~\eqref{def-GO} is satisfied:
\CompactEnumerate
\item  $g^1_{ij} := \dotprod{\av_i, \av_{i_t}}$; $\delta_{ij} = 0$,
\item  $g^2_{ij} := \max\left\{-\norm{\av_i}\norm{\av_j}, \min\left\{S(i,j), \norm{\av_i}\norm{\av_j} \right\}\right\} $; $\delta_{ij} = \epsilon \norm{\av_i}\norm{\av_j}$, where $S \colon [n] \times [n]$ denotes a function with the property
\end{list}
\vspace{-12pt}
\begin{align}
\abs{ S(i,j) - \dotprod{\av_i,\av_j}} \leq \epsilon \norm{\av_i}\norm{\av_j}\,, \quad \forall i,j \in [n]\,\label{def-ApproxScalar}
\end{align}
\vspace{-22pt}
\CompactEnumerate
\setcounter{enumi}{2}
\item  $g^3_{ij} := 0$; $\delta_{ij} = \norm{\av_i}\norm{\av_j}$,
\item  $g^4_{ij} \! \in_{\rm u.a.r.} \! [-\norm{\av_i}\norm{\av_j}, \norm{\av_i}\norm{\av_j}] $; $\delta_{ij} = \norm{\av_i}\norm{\av_j}$.
\end{list}

Oracle $g^1$ can be used in the rare cases where the dot product matrix is accessible to the optimization algorithm without any extra cost. In this case the updates will all be exact. If this matrix is not available, then the computation of each scalar product takes time $O(d)$. Hence, they cannot be recomputed on the fly, as argued in Section~\ref{sec-complexity}. In contrast, the oracles $g^3$ and $g^4$ are extremely cheap to compute, but the error bounds are worse. In the numerical experiments in Section~\ref{sec:emp_eval} we demonstrate that these oracles perform surprisingly well.

The oracle $g^2$ can for instance be realized by low-dimensional embeddings, such as given by the Johnson-Lindenstrauss lemma (cf. \citet{Achlioptas2003,Matousek:2008}). By embedding each vector in a lower-dimensional space of dimension $O\left( \epsilon^{-2} \log n\right)$ and computing the scalar products of the embedding in time $O(\log n)$, relation~\eqref{def-ApproxScalar} is satisfied.

\paragraph{Updating the gradient of the active coordinate.}
So far we only discussed the update of the passive coordinates. For the active coordinate the best strategy depends on the update rule $\mathcal{M}$ from~\eqref{def-M}. If exact line search is used, then $0 \in  \nabla_{i_t}f(\xv_{t+1})$. For other update rules we can update the gradient $\nabla_{i_t} f(\xv_{t+1})$ with the same gradient oracles as for the other coordinates, however we need also to take into account the change of the gradient of $\Psi_i([\xv_t]_i)$. If $\Psi_i$ is simple, like for instance in ridge or lasso, the subgradients at the new point can be computed efficiently.

\paragraph{Bounded variation.}
In many applications the Hessian $\nabla^2 f(A\tilde{\xv})$ is not so simple as in the case of square loss. If we assume that the Hessian of $f$ is bounded, i.e. $\nabla^2f(A\xv) \preceq M\cdot {I_n}$ for a constant $M \geq 0$, $\forall \xv \in \R^n$, then it is easy to see that the following holds : 
\begin{align*}
-M \| \av_i \| \| \av_j \| \leq \dotprod{\av_i, \nabla^2 f(A \tilde{\xv}) \av_{i_t}} \leq M \| \av_i \| \| \av_j \| \,.
\end{align*}
Using this relation, we can define gradient oracles for more general functions, by taking the additional approximation factor $M$ into account. The quality can be improved, if we have access to local bounds on $\nabla^2f(A\xv)$.

\paragraph{Heuristic variants.}
By design, ASCD is robust to high errors in the gradient estimations --  the steepest descent direction is always contained in the active set. 
However, instead of using only the very crude oracle $g^4$  to approximate \emph{all} scalar products, 
it might be advantageous to compute some scalar products with higher precision.
We propose to use a caching technique to compute the scalar products with high precision for all vectors in the active set (and storing a matrix of size $O(\mathcal{I}_t \times n)$).
This presumably works well if the active set does not change much over time.

\section{Extension to Composite Functions}
\label{sec:composite}
The key ingredients of ASCD 
are the coordinate-wise upper and lower bounds on the gradient and the definition of the active set $\mathcal{I}_t$ which ensures that the steepest descent direction is always kept and that only provably bad directions are removed from the active set. These ideas can also be generalized to the setting of composite functions~\eqref{def-composite}.
We already discussed some popular GS-$\ast$ update rules in the introduction in Section~\ref{sec-compositeintro}.

Implementing ASCD for the GS-s rule is straight forward, and we comment on the GS-r in the appendix in Sec.~\ref{sec:GS-r}. Here we exemplary detail the modification for the GS-q rule~\eqref{def-GSq}, which turns out to be the most evolved (the same reasoning also applies to the GSL-q rule from~\cite{nutini2015coordinate}).
In Algo.~\ref{alg-4} we show the construction --- based just on approximations of the gradient of the smooth part $f$ --- of the active set $\mathcal{I}$. 
For this we compute upper and lower bounds $\vv,\wv$ on $\min_{y \in \R} V(\xv,y,\nabla_i f(\xv))$, such that
\begin{align}
[\vv]_i \leq \min_{y \in \R} V(\xv,y,\nabla_i f(x) \leq [\wv]_i \quad \forall i \in [n]\,.
\end{align}
The selection of the active coordinate is then based on these bounds. 
Similar as in Lemma~\ref{lem-contained} and Theorem~\ref{thm-sandwich} this set has the property $i_{\rm GS-q} \in \mathcal{I}$, and 
directions are only discarded in such a way that the efficiency of ASCD-q  cannot drop below the efficiency of UCD. 
The proof 
can be found in the appendix in Section~\ref{sec:GSq}.

\begin{algorithm}[tb]
   \caption{Adaptation of ASCD for GS-q rule}
   \label{alg-4}
   \renewcommand{\algorithmiccomment}[1]{\hfill {\footnotesize\textit{#1}} \null}   
\begin{algorithmic}
   \STATE {\bfseries Input:} Gradient estimate $\gtv$, error bounds $\rv$.
   \STATE For $i\in[n]$ define: \COMMENT{compute u.-and l.-bounds}
   \STATE $[\uv]_i := [\gtv]_i + [\rv]_i$, $[\ellv]_i := [\gtv]_i - [\rv]_i$ \\[1ex]
   \STATE $[\usv]_i := \argmin_{y \in \R} V(\xv,y, [\uv]_i)$ \COMMENT{minimize the model}
   \STATE $[\ellsv]_i := \argmin_{y \in \R} V(\xv,y, [\ellv]_i)$ \\[1ex]
   \STATE \COMMENT{compute u.-and l. bounds on $\min_{y \in \R} V(\xv,y,\nabla_i f(\xv))$}
   \STATE $[\omegav_u]_i := V(\xv, [\usv]_i, [\uv]_i)  \! + \! \max\{0, [\usv]_i([\ellv]_i - [\uv]_i)\} $
   \STATE $[\omegav_\ell]_i  := V(\xv, [\ellsv]_i, [\ellv]_i)  + \max\{0, [\ellsv]_i([\uv]_i - [\ellv]_i)\} $    
   \STATE $[\vv]_i := \min \left\{ V(\xv, [\usv]_i, [\uv]_i),  V(\xv, [\ellsv]_i, [\ellv]_i) \right\}$
   \STATE $[\wv]_i := \min\left\{[\omegav_u]_i, [\omegav_\ell]_i, \Psi_i([\xv]_i) \right\}$ \\[1ex]
    \STATE $\operatorname{av}({\mathcal{I}}) := \frac{1}{\abs{\mathcal{I}}} \sum_{i \in \mathcal{I}} [\wv]_i$ \COMMENT{compute active set}
   \STATE $\mathcal{I}_t := \argmin_{\mathcal{I}} \abs{\left\{\mathcal{I} \subseteq [n] \mid [\vv]_i > \operatorname{av}({\mathcal{I}}), \forall i \notin \mathcal{I}\right\}}$
\end{algorithmic}
\end{algorithm}

\section{Analysis of Competitive Ratio}%
\label{sec:competitiveRatio}
In Section~\ref{sec:algorithm} we derived in Thm.~\ref{thm-sandwich} that the one step progress of ASCD is between the bounds on the onestep progress of UCD and SCD. However, we know that the efficiency of the latter two methods can differ much, up to a factor of $n$. In this section we will argue that in certain cases where SCD performs much better than UCD, ASCD will accelerate as well. To measure this effect, we could for instance consider the ratio:
\begin{align}
\varrho_t := \frac{\abs{ \left\{i \in \mathcal{I}_t \mid \abs{\nabla_i f(\xv_t)} \geq \frac{1}{2} \norm{\nabla f(\xv_t)}_\infty \right\}}}{\abs{\mathcal{I}_t}}\,, \label{eq-compR2}
\end{align}
For general functions this expression is a bit cumbersome to study, therefore we restrict our discussion to the class of objective functions~\eqref{def-lowdim} as introduced in Sec.~\ref{sec-lowerbound}. 
Of course not all real-world objective functions will fall into this class, however this problem class is still very interesting in our study, as we will see in the following, because it will highlight the ability (or disability) of the algorithms to eventually identify the right set of  `active' coordinates.

For the functions with the structure~\eqref{def-lowdim} (and $q$ as in Thm.~\ref{thm-lowerbound}), the active set falls into the first $s$ coordinates. 
Hence it is reasonable to approximate $\varrho_t$ by the competitive ratio
\vspace{-5pt} 
\begin{align}
\rho_t := \frac{\abs{\mathcal{I}_t \cap [s]}}{\abs{\mathcal{I}_t}}\,. \label{eq-compR}
\end{align}
It is also reasonable to assume that in the limit, $(t \to \infty)$, a constant fraction of the $[s]$ will be contained in the active set~$\mathcal{I}_t$ (it might not hold $[s] \subseteq \mathcal{I}_t$ $\forall t$, as for instance with exact line search the directional derivative vanishes just after the update). In the following theorem we calculate %
$\rho_t$ for $(t \to \infty)$, the proof is given in the appendix. 

\begin{theorem}
\label{thm-competitiveRatio2}
Let $f \colon \R^n \to \R$ be of the form~\eqref{def-lowdim}. For indices $i \notin [s]$ define $\mathcal{K}_i := \left\{t \mid i \notin \mathcal{I}_t, i \in \mathcal{I}_{t-1} \right\}$. For $j \in \mathcal{K}_i$  define $T_j^i := \min \left\{t - j\mid i \in \mathcal{I}_{j+t} \right\}$, i.e. the number of iterations outside the active set, $T^i_\infty := \lim_{t \to \infty}\EE{j \in \mathcal{K}_i}{T_j^i \mid j > k}$, and the average $T_\infty:=  \EE{i \notin [s]}{T_\infty^i}$. %
If there exists a constant $c > 0$ such that $\lim_{t \to \infty} \abs{[s] \cap \mathcal{I}_t} = cs$, then (with the notation $\rho_\infty := \lim_{t \to \infty} \E{\rho_t}$),
\vspace{-3pt} 
\begin{align}
\rho_\infty \geq \frac{2 c s}{cs + n -s - T_\infty + \sqrt{\theta}} \,, \label{eq-comptRexact}
\end{align}
where $\theta \equiv \theta := n^2 + (c-1)^2s^2 + 2n((c-1)s - T_\infty) + 2(1+c)sT_\infty +T_\infty^2$.
Especially,  $\rho_\infty \geq 1 - \frac{n-s}{T_\infty}$.
\end{theorem}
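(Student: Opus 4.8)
The plan is to set up a renewal/steady-state counting argument that tracks how many of the $n-s$ "passive" coordinates ($i \notin [s]$) are temporarily sitting inside the active set $\mathcal{I}_t$ at a typical late time $t$. The idea is that each passive coordinate $i$ alternates between being in $\mathcal{I}_t$ and being excluded; when excluded it stays out for a block of $T_j^i$ iterations before re-entering, and in the limit the expected out-time is $T_\infty^i$, averaging to $T_\infty$ over $i \notin [s]$. First I would argue that, by a renewal-theoretic (long-run-fraction) argument, the asymptotic fraction of time that a given passive coordinate $i$ spends \emph{outside} $\mathcal{I}_t$ is governed by $T_\infty^i$ relative to its full in/out cycle length; summing over $i \notin [s]$ gives $\lim_{t\to\infty}\E{\,\abs{([n]\setminus[s]) \setminus \mathcal{I}_t}\,}$, and hence $\lim_{t\to\infty}\E{\,\abs{\mathcal{I}_t \setminus [s]}\,} = (n-s) - (\text{that quantity})$. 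Combined with the hypothesis $\lim_{t\to\infty}\abs{[s]\cap\mathcal{I}_t} = cs$, this yields an expression for $\E{\abs{\mathcal{I}_t}} = cs + \abs{\mathcal{I}_t\setminus[s]}$ in the limit, and then $\rho_\infty = cs / \E{\abs{\mathcal{I}_t}}$.

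The delicate point is that $\abs{\mathcal{I}_t}$ appears both in the numerator-fixing relation and implicitly in the dynamics that determine how often coordinates are ejected (a coordinate $i\notin[s]$ leaves precisely when $[\uv_t]_i^2 < \operatorname{av}(\mathcal{I}_t)$, which depends on the current set size). So I would introduce the shorthand $m := \lim_{t\to\infty}\E{\abs{\mathcal{I}_t}}$ and derive a fixed-point (here quadratic) equation for $m$. Roughly: the number of passive coordinates currently ejected is, on average over a cycle, proportional to the ejection rate times $T_\infty$; balancing "coordinates leaving per step" against "coordinates returning per step" in steady state gives a relation of the form $m^2 - (\text{something linear in } n,s,c,T_\infty)\, m + (\text{const}) = 0$. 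Solving this quadratic for the relevant root produces exactly the denominator $cs + n - s - T_\infty + \sqrt{\theta}$ (up to the factor $2cs$ in the numerator coming from $\rho_\infty = cs/m$ after rationalizing the quadratic-formula expression). I would then substitute back to confirm $\theta$ has precisely the stated form.

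For the final simplified bound $\rho_\infty \geq 1 - \frac{n-s}{T_\infty}$, I would proceed by a cruder direct estimate rather than through the quadratic: since at most $n-s$ passive coordinates can ever be ejected and each stays out for $\approx T_\infty$ steps, in steady state at least an $\bigl(1 - \tfrac{n-s}{T_\infty}\bigr)$-fraction of the "slots" are occupied by the $[s]$-coordinates (equivalently, bounding the number of passive coordinates simultaneously present in $\mathcal{I}_t$ from above), which after dividing through gives the claimed inequality; alternatively one checks it follows from \eqref{eq-comptRexact} by bounding $\sqrt{\theta}$ from above.

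The main obstacle I anticipate is making the renewal argument rigorous: one must justify that the per-coordinate in/out process has a well-defined stationary behavior (the hypothesis $\lim_{t\to\infty}\abs{[s]\cap\mathcal{I}_t}=cs$ is assumed, but the analogous stabilization for the passive coordinates and the interchange of the limit with the expectation over $j\in\mathcal{K}_i$ need care), and that the "coordinates in = coordinates out" balancing is legitimate despite the coupling through $\operatorname{av}(\mathcal{I}_t)$. I expect this is handled by working directly with the limiting quantities $T_\infty^i$, $T_\infty$ as \emph{defined} in the statement (so the stationarity is built into the hypotheses) and then the remainder is the algebraic solution of the fixed-point equation, which is routine.
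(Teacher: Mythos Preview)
Your proposal is correct and follows essentially the same equilibrium argument as the paper. The only cosmetic difference is the choice of unknown: you work with $m := \lim_{t\to\infty}\E{\abs{\mathcal{I}_t}}$, whereas the paper parameterizes by $\alpha_\infty \in [0,1]$ with $\alpha_\infty(n-s) = \abs{\mathcal{I}_t \setminus [s]}$; since $m = cs + \alpha_\infty(n-s)$, the resulting quadratic is identical, and the simplified bound $\rho_\infty \geq 1 - \tfrac{n-s}{T_\infty}$ drops out in one line from $\rho_\infty = 1 - \tfrac{(1-\alpha_\infty)(n-s)}{T_\infty}$ and $\alpha_\infty \geq 0$, exactly the ``cruder direct estimate'' you describe.
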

In Figure~\ref{fig-competitiveR} we compare the lower bound~\eqref{eq-comptRexact} of the competitive ratio in the limit ($t\to \infty$) with actual measurements of $\rho_t$ for simulated example with parameters 
$n=100$, $s=10$, $c=1$ and various $T_\infty \in \{50,100,400\}$. We initialized the active set $\mathcal{I}_0 = [s]$, but we see that the equilibrium is reached quickly.

\begin{figure}[tb]
  \centering
\includegraphics[width=0.95\columnwidth]{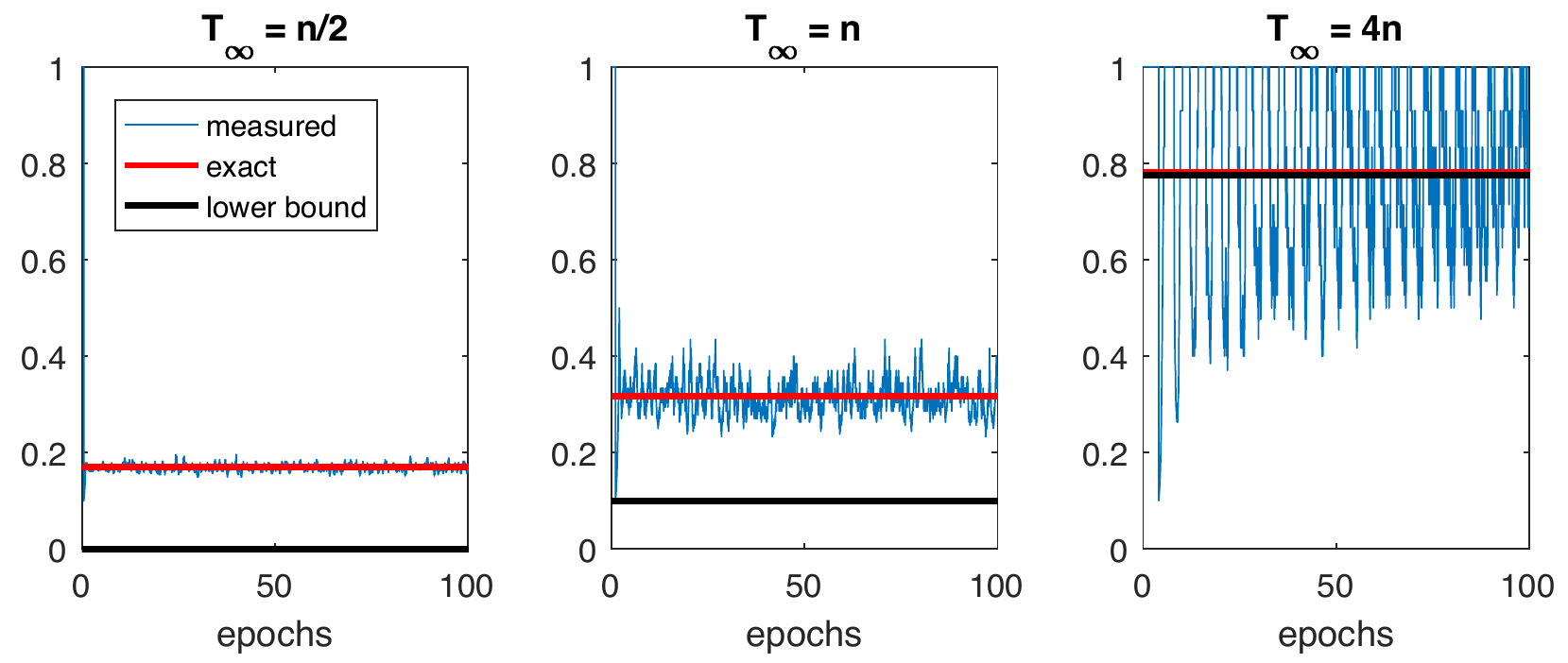}
\vskip-10pt
\caption{Competitive ratio $\rho_t$ (\emph{blue}) in comparison with $\rho_\infty$~\eqref{eq-comptRexact} (\emph{red}) and the lower bound  $\rho_\infty \geq 1 - \frac{n-s}{T_\infty}$ (\emph{black}). Simulation for parameters $n=100$, $s=10$, $c=1$ and $T_\infty \in \{50,100,400\}$.  }
   \vspace{-1mm}
\label{fig-competitiveR}
\end{figure}

\begin{figure*}[ht]  
\centering
\begin{subfigure}{.25\textwidth}
  \centering
  \includegraphics[width=4.7cm,height=3.8cm]{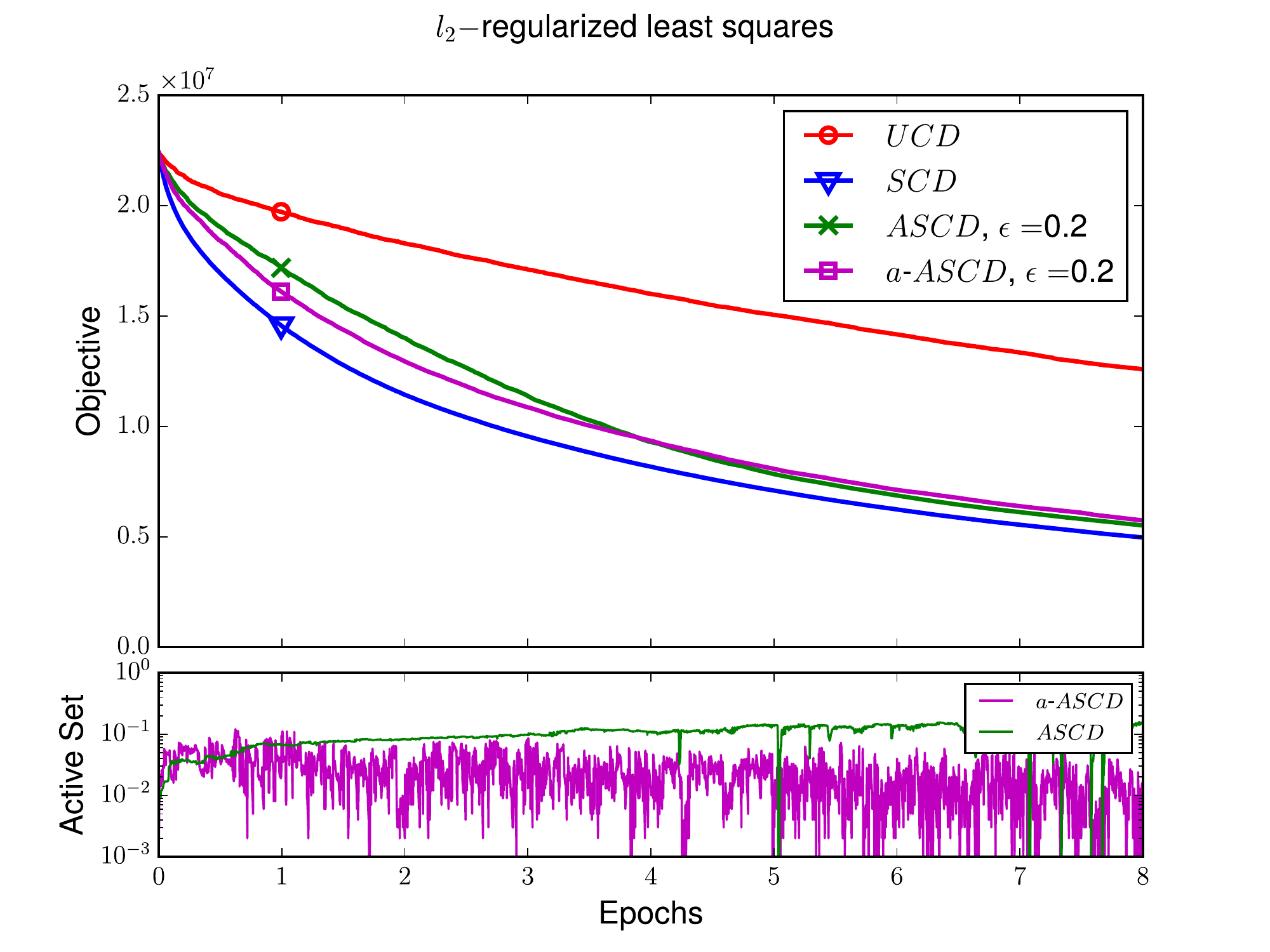}
   \vspace{-7mm}
  \caption{Convergence for $l_2$ }
  \label{fig:simul_l2}
\end{subfigure}%
\begin{subfigure}{.25\textwidth}
  \centering
  \includegraphics[width=4.7cm,height=3.8cm]{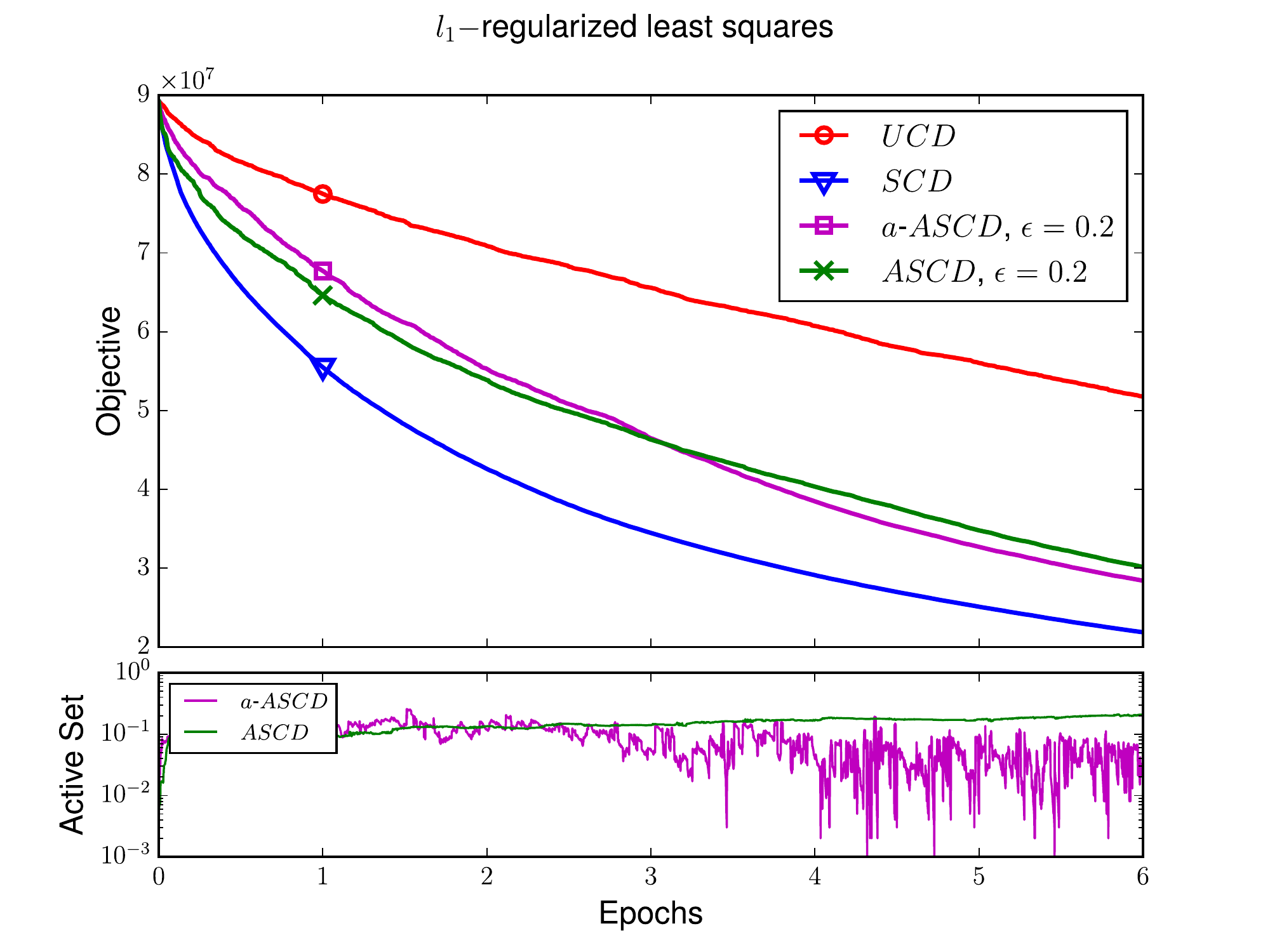}
  \vspace{-7mm}
  \caption{Convergence for $l_1$ }
  \label{fig:simul_l1}
\end{subfigure}%
\begin{subfigure}{.25\textwidth}
  \centering
  \includegraphics[width=4.7cm,height=3.8cm]{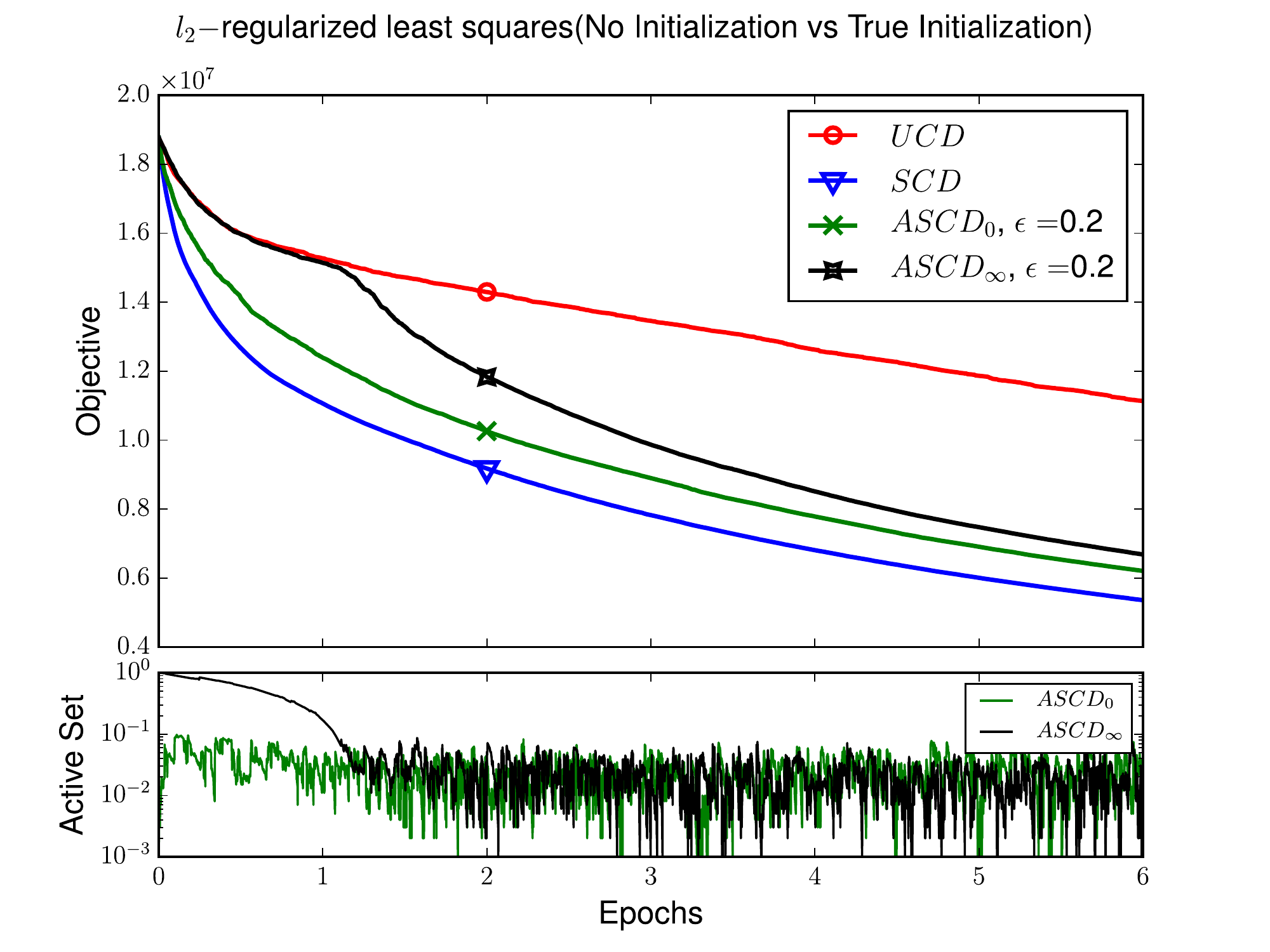}
  \vspace{-7mm}
  \caption{True vs No Initialization for $l_2$ }
  \label{fig:simul_t_vs_n}
\end{subfigure}%
\begin{subfigure}{.25\textwidth}
  \centering
  \includegraphics[width=4.7cm,height=3.8cm]{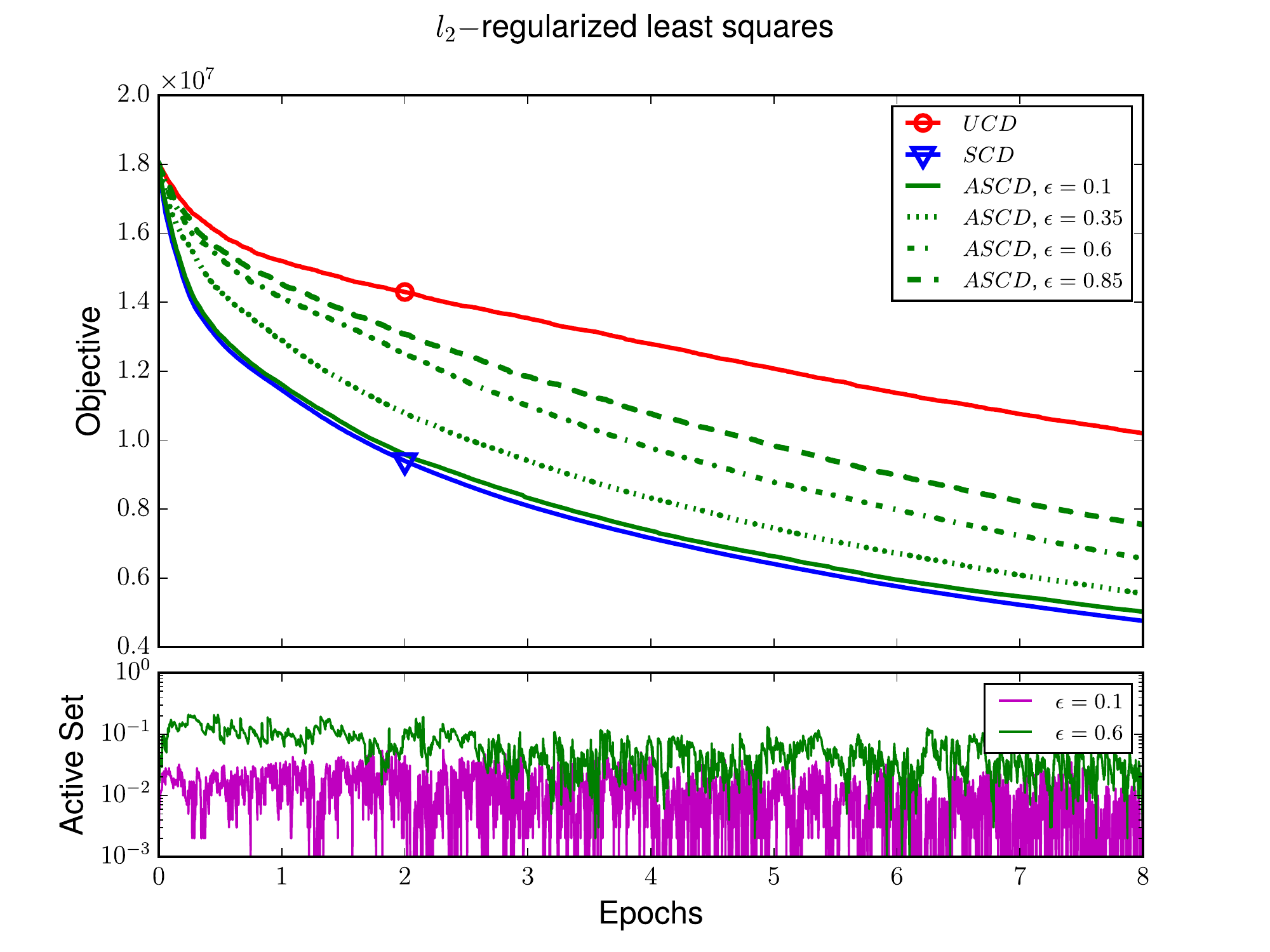}
  \vspace{-7mm}
  \caption{Error Variation (ASCD)}
  \label{fig:simul_error}
\end{subfigure}%
\vspace{-1mm}
\caption{Experimental results on synthetically generated datasets}
\label{fig:results_synthetic}
\end{figure*}

\begin{figure*}[ht] 
\centering
\begin{subfigure}{.25\textwidth}
  \centering
  \includegraphics[width=4.7cm,height=3.8cm]{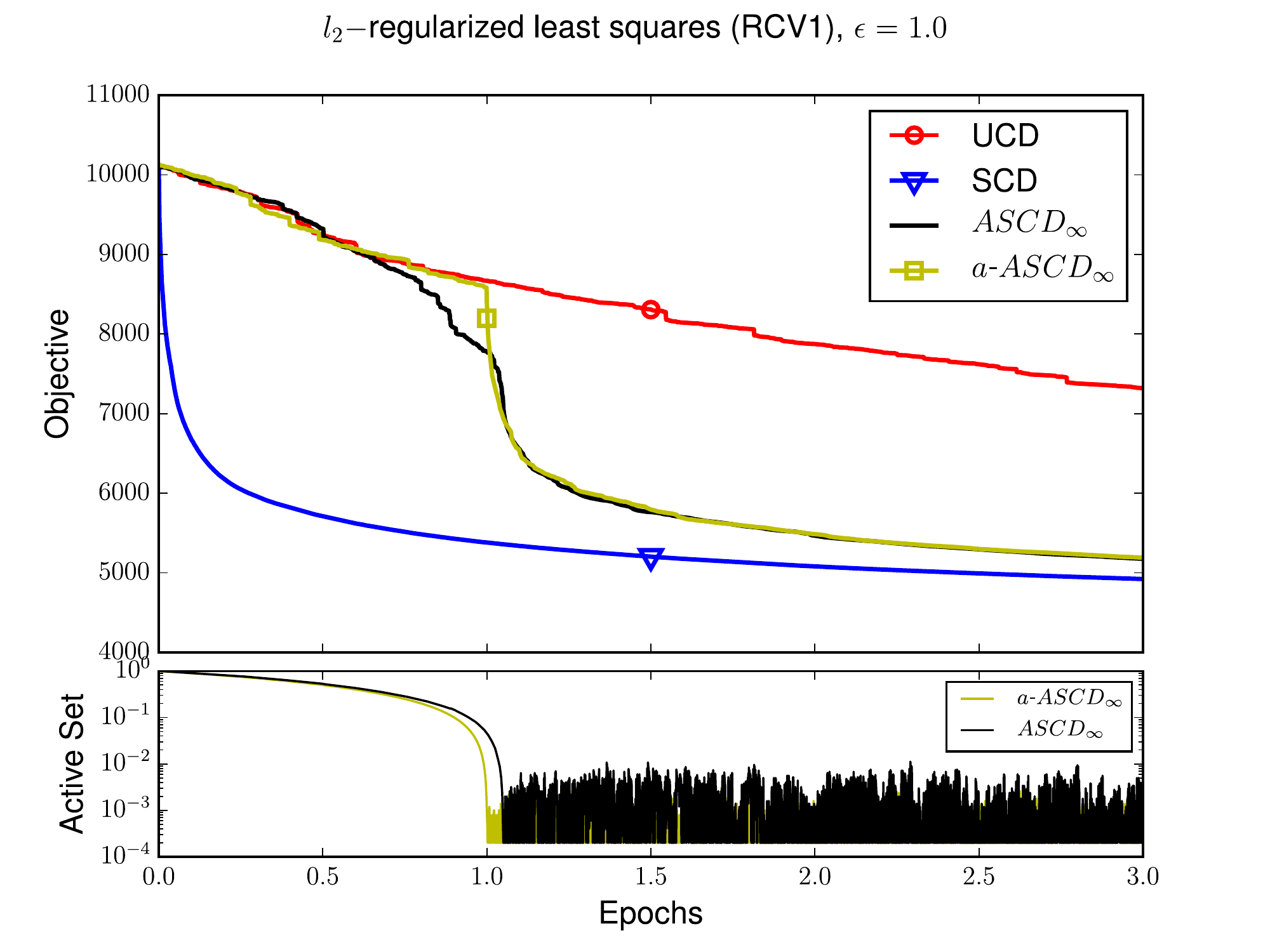}
  \vspace{-7mm}
  \caption{Convergence for $l_2$ }
  \label{fig:rcv_l2}
\end{subfigure}%
\begin{subfigure}{.25\textwidth}
  \centering
  \includegraphics[width=4.7cm,height=3.8cm]{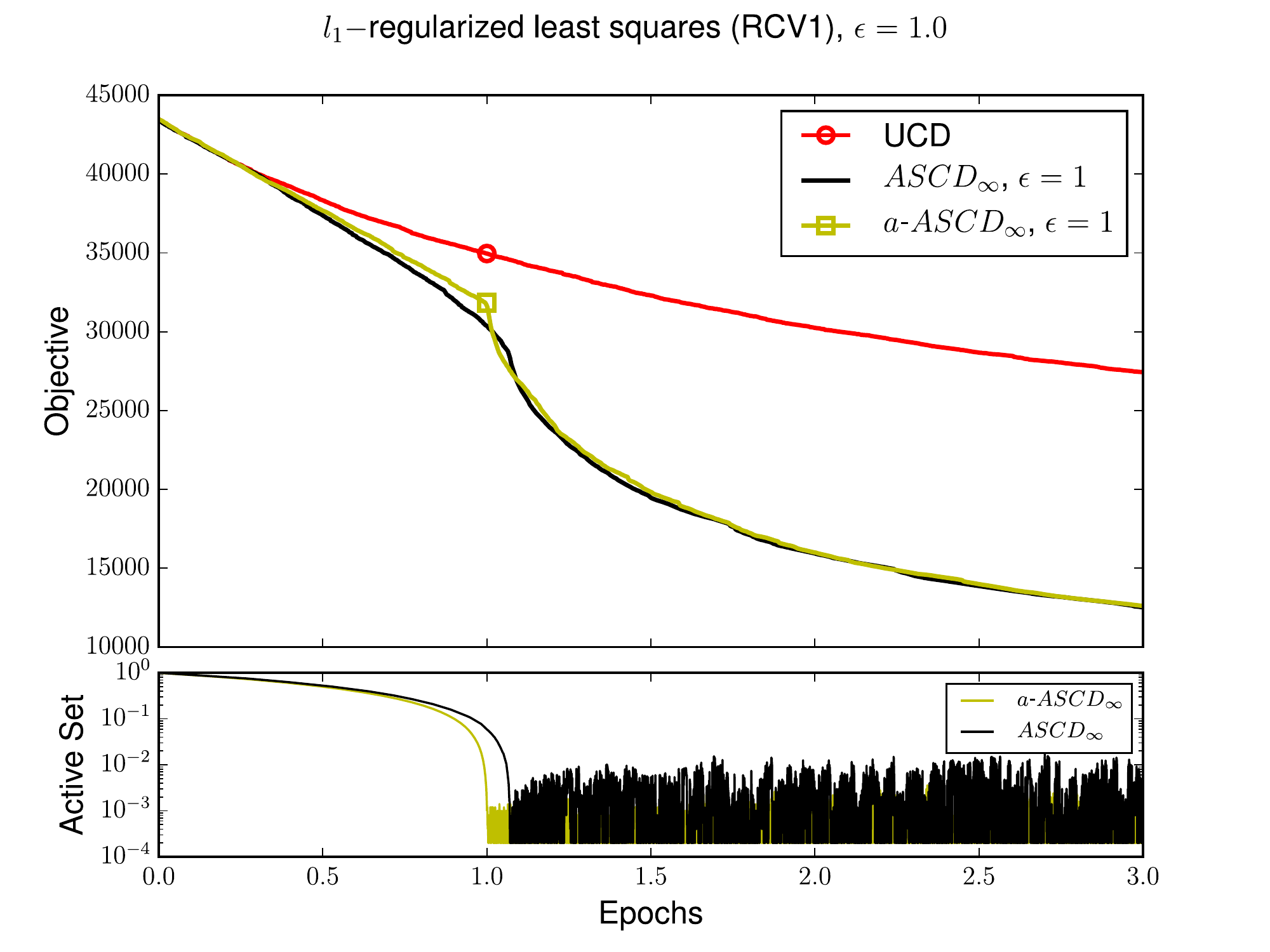}
  \vspace{-7mm}
  \caption{Convergence for $l_1$ }
  \label{fig:rcv_l1}
\end{subfigure}%
\begin{subfigure}{.25\textwidth}
  \centering
  \includegraphics[width=4.7cm,height=3.8cm]{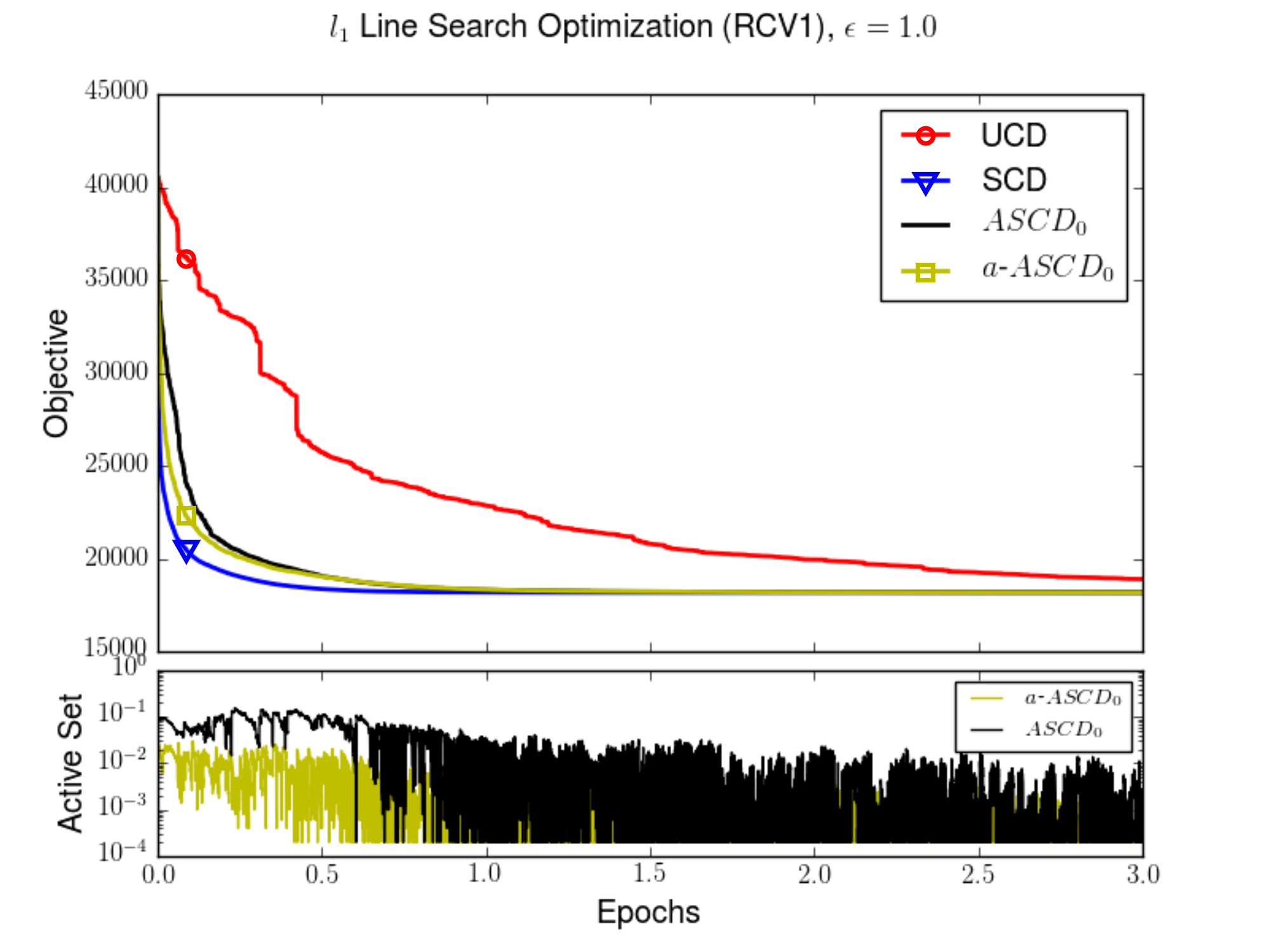}
  \vspace{-7mm}
  \caption{Line search for $l_1$}
  \label{fig:rcv_figc}
\end{subfigure}%
\begin{subfigure}{.25\textwidth}
  \centering
  \includegraphics[width=4.7cm,height=3.8cm]{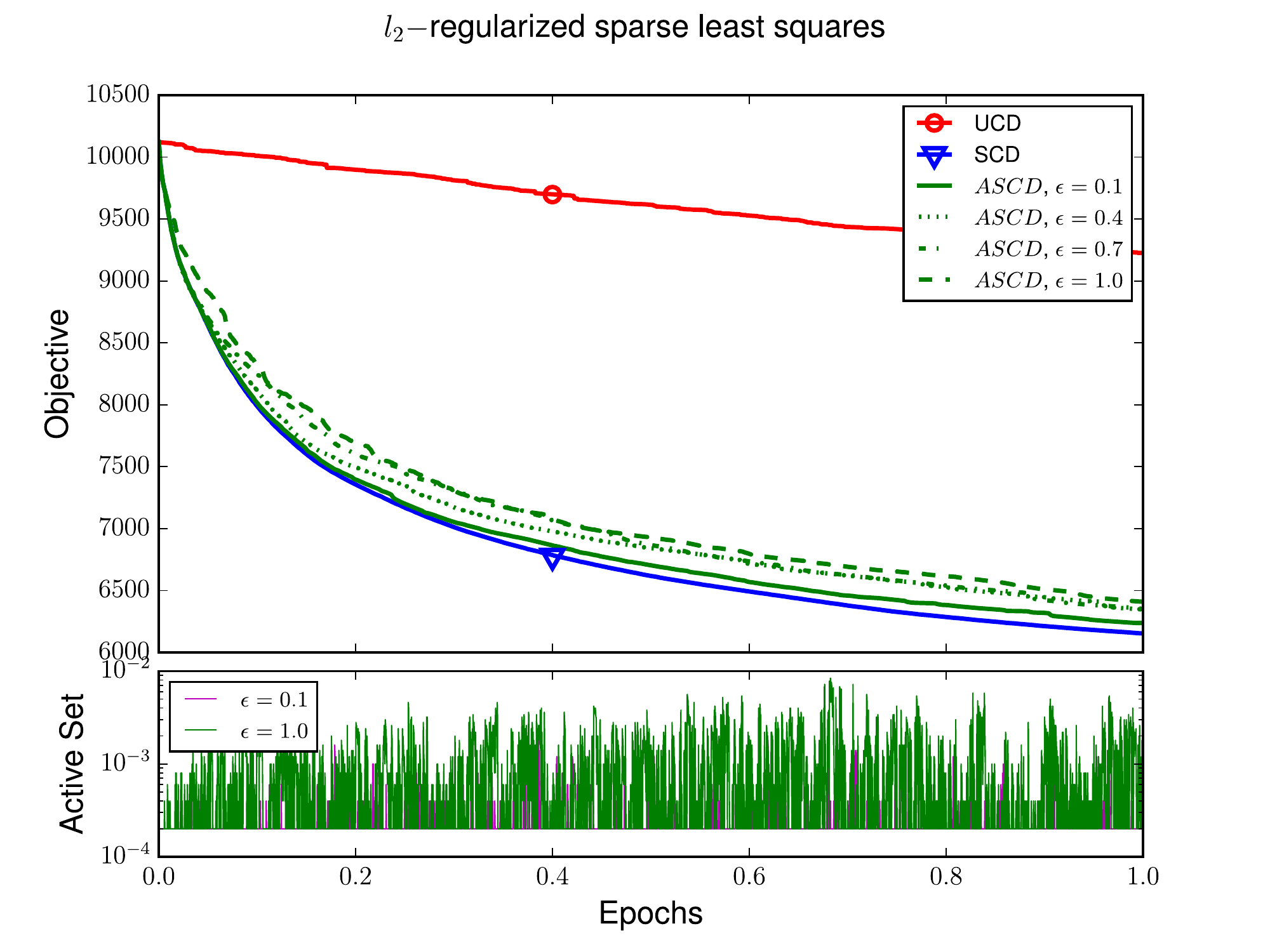}
  \vspace{-7mm}
  \caption{Error Variation (ASCD)}
   \label{fig:rcv_error}
\end{subfigure}%
\vspace{-1mm}
\caption{Experimental results on the RCV1-binary dataset}
 \label{fig:results_rcv1}
\end{figure*}

\subsection{Estimates of the competitive ratio}
Based on this Thm.~\ref{thm-competitiveRatio2}, we can now estimate the competitive ratio in various scenarios. On the class~\eqref{def-lowdim} it holds $c \approx 1$ as we argued before. Hence the competitive ratio~\eqref{eq-comptRexact} just depends on $T_\infty$. This quantity measures how many iterations a coordinate $j \notin [s]$ is in average outside of the active set $\mathcal{I}_t$. From the lower bound 
we see that the competitive ratio $\rho_t$ approaches a constant for ($t \to \infty$) if $T_\infty = \Theta\left( n \right)$, for instance $\rho_\infty \geq 0.8$ if $T_\infty \geq 5n$.

As an approximation to $T_\infty$, we estimate the quantities $T_{t_0}^j$ defined in Thm.~\ref{thm-competitiveRatio2}. $T_{t_0}^j$ denotes the number of iterations it takes until coordinate $j$ enters the active set again, assuming it left the active set at iteration $t_0-1$.  We estimate $T_{t_0}^j \geq \hat{T}$, where $\hat{T}$ denotes maximum number of iterations such that
\vspace{-1mm}
\begin{align}
 \sum_{t=t_0}^{t_0 + \hat{T}} \gamma_{t} \delta_{i_i j}\leq  \frac{1}{s} \sum_{k =1}^s \abs{\nabla_k f \left(\xv_{t_0 + \hat{T}}\right)} \quad  \forall j \notin [s]. \label{eq-detail}
\end{align}

For smooth functions, the steps $\gamma_t = \Theta \left( \abs{\nabla_{i_t} f(\xv_{t})} \right)$ and if we additionally assume that the errors of the gradient oracle are uniformly bounded $\delta_{ij} \leq \delta$, the sum in~\eqref{eq-detail} simplifies to $\delta  \sum_{t=t_0}^{t_0 + \hat{T}}  \abs{\nabla_{i_t} f(\xv_t)}$.

For smooth, but not strongly convex function $q$, the norms of the gradient changes very slowly, with a rate independent of $s$ or $n$, and we get $\hat{T} =  \Theta\left( \frac{1}{\delta} \right)$. Hence, the competitive ratio is constant for  $\delta = \Theta\left( \frac{1}{n} \right)$.

For strongly convex function $q$, the norm of the gradient decreases linearly, say $\norm{\nabla f(\xv_t)}_2^2 \propto e^{\kappa t}$ for $\kappa \approx \frac{1}{s}$. I.e. it decreases by half after each $  \Theta\left( s \right)$ iterations. Therefore to guarantee $\hat{T} = \Theta\left( n \right)$ it needs to hold $\delta = e^{-\Theta\left( \frac{n}{s} \right)}$. This result seems to indicate that the use of ACDM is only justified if $s$ is large, for instance $s \geq \frac{1}{4}n$. Otherwise the convergence on $q$ is too fast, and the gradient approximations are too weak. 
However, notice that we assumed  $\delta$ to be an uniform bound on all errors. If the errors have large discrepancy the estimates become much better (this holds for instance on datasets where the norm data vectors differs much, or when caching techniques as mentioned in Sec.~\ref{sec:gradUpdates} are employed).  
\section{Empirical Observations} \label{sec:emp_eval}
In this section we evaluate the empirical performance of ASCD on synthetic and  real datasets. 
We consider the following regularized general linear models:
\begin{align}
\min_{\xv \in \R^n} \tfrac{1}{2} \| A\xv - \bv\|_2^2 + \tfrac{\lambda}{2} \| \xv \|_2^2 \,,  \label{eq:emp_problem1} \\
\min_{\xv \in \R^n} \tfrac{1}{2} \| A\xv - \bv \|_2^2 + \lambda \| \xv \|_1 \,, \label{eq:emp_problem2}
\end{align}
that is, $l_2$-regularized least squares~\eqref{eq:emp_problem1} as well as $l_1$-regularized linear regression (Lasso) in~\eqref{eq:emp_problem2}, respectively.

\paragraph{Datasets.} The datasets $A \in \mathbb{R}^{d \times n}$ in problems~\eqref{eq:emp_problem1} and~\eqref{eq:emp_problem2} were chosen as follows for our experiments. For the synthetic data, we follow the same  generation procedure as described in~\cite{nutini2015coordinate}, which generates very sparse data matrices. 
For completeness, full details of the data generation process are also provided in the appendix in Sec.~\ref{sec:datageneration}. For the synthetic data 
we choose $n=5000$ for problem~\eqref{eq:emp_problem2} and $n = 1000$ for problem~\eqref{eq:emp_problem1}. Dimension $d=1000$ is fixed for both cases.\\
For real datasets, we perform the experimental evaluation on RCV1 (binary,training), which consists of $20,242$ samples, each of dimension $47,236$~\cite{Lewis:2004:RNB}. We use the un-normalized version with all non-zeros values set to $1$ (bag-of-words features).

\paragraph{Gradient oracles and implementation details.}
On the RCV1 dataset, we approximate the scalar products with the oracle $g^4$ that was introduced in Sec.~\ref{sec:gradUpdates}. This oracle is extremely cheap to compute, as the norms $\norm{\av_i}$ of the columns of $A$ only need to be computed once.\\
On the synthetic data, we simulate the oracle $g^2$ for various precisions values $\epsilon$. For this, we sample a value uniformly at random from the allowed error interval~\eqref{def-ApproxScalar}. Figs.~\ref{fig:simul_error} and~\ref{fig:rcv_error} show the convergence for different accuracies. \\
For the $l_1$-regularized problems, we used ASCD with the GS-s rule (the experiments in~\cite{nutini2015coordinate} revealed almost identical performance of the different GS-$\ast$ rules).\\
We compare the performance of UCD, SCD and ASCD. We also implement the heuristic version a-ASCD that was introduced in Sec.~\ref{sec:algorithm}. All algorithm variants use the same step size rule (i.e. the method $\mathcal{M}$ in Algorithm~\ref{alg-1}). We use exact line search for the experiment in Fig.~\ref{fig:rcv_figc}, for all others we used a fixed step size rule (the convergence is slower for all algorithms, but the different effects of the selection of the active coordinate is more distinctly visible).\\
ASCD is either initialized with the true gradient (Figs.~\ref{fig:simul_l2}, \ref{fig:simul_l1}, \ref{fig:simul_error}, \ref{fig:rcv_figc}, \ref{fig:rcv_error}) or  arbitrarely (with error bounds $\delta= \infty$) in Figs.~\ref{fig:rcv_l2} and~\ref{fig:rcv_l1} (Fig.~\ref{fig:simul_t_vs_n} compares both initializations). \\
Fig.~\ref{fig:results_synthetic} shows results on the synthetic data, Fig.~\ref{fig:results_rcv1} on the RCV1 dataset. All plots show also the size of the active set $\mathcal{I}_t$. The plots~\ref{fig:rcv_figc} and \ref{fig:rcv_error} are generated on a subspace of RCV1, with $10000$ and $5000$ randomly chosen columns, respectively.

Here are the highlights of our experimental study:
\CompactEnumerate
\item  \textbf{No initialization needed.} We observe (see e.g. Figs.~\ref{fig:simul_t_vs_n},\ref{fig:rcv_l2}, \ref{fig:rcv_l1}) that initialization with the true gradient values is \emph{not} needed at beginning of the optimization process (the cost of the initialization being as expensive as one epoch of ASCD). 
Instead, the algorithm performs strong in terms of learning the active set on its own, and the set converges very fast after just one epoch.
\item  \textbf{High errors toleration.}
 The gradient oracle $g^4$ gives very crude approximations, however the convergence of ASCD is excellent on RCV1 (Fig.~\ref{fig:results_rcv1}).  Here the size of the true active set is very small (in the order of $0.1\%$ on RCV1) and ASCD is able to identify this set. Fig.~\ref{fig:rcv_error} shows that almost nothing can be gained from more precise (and more expensive) oracles.
\item  \textbf{Heuristic a-ASCD performs well.}
The convergence behavior of ASCD follows theory. For the heuristic version a-ASCD (which computes the active set slightly faster, but Thm.~\ref{thm-sandwich} does not hold) performs identical to ASCD in practice (cf. Figs.~\ref{fig:results_synthetic}, \ref{fig:results_rcv1}), and sometimes slightly better. This is explained by the active set used in ASCD typically being larger than the active set of a-ASCD (Figs.~\ref{fig:simul_l2},\ref{fig:simul_l1}, \ref{fig:rcv_l2}, \ref{fig:rcv_l1}).
\end{list}

\section{Concluding Remarks}
\label{sec:conclusion}
\vspace{-1mm}

We proposed ASCD, a novel selection mechanism for the active coordinate in CD methods. Our scheme enjoys three favorable properties: 
(i)~its performance can reach the performance steepest CD --- both in theory and practice, 
(ii)~the performance is never worse than uniform CD,
(iii)~in many important applications, the scheme it can be implemented at no extra cost per iteration.

ASCD calculates the active set in a safe manner, and picks the active coordinate uniformly at random from this smaller set. It seems possible that an adaptive sampling strategy on the active set could boost the performance even further.
Here we only study CD methods where a single coordinate gets updated in each iteration. ASCD can immediately also be generalized to block-coordinate descent methods. However, the exact implementation in a distributed setting can be challenging.

Finally, it is an interesting direction to extend ASCD also to the stochastic gradient descent setting (not only heuristically, but with the same strong guarantees as derived in this paper).

{\small
\bibliographystyle{icml2017}
\bibliography{opt-ml}
}


\newpage
\appendix

\clearpage
\onecolumn
\begin{center}
{\centering \LARGE Appendix }
\vspace{1cm}
\end{center}
\section{On Steepest Coordinate Descent} \label{appen:steep}
\subsection{Convergence on Smooth Functions}
\label{sec-proofs-for-smooth}

\begin{lemma}[Lower bound on the one step progress on smooth functions]\label{lem:smooth_steepest}
Let $f \colon \R^n \to \R$ be convex and coordinate-wise $L$-smooth. For a sequence of iterates $\{\xv_t\}_{t \geq 0}$ define the progress measure
\begin{align}
\Delta(\xv_t) := \frac{1}{ \E{ f(\xv_{t+1}) - f(\xv^\star) \mid \xv_t}} -  \frac{1}{ f(\xv_{t}) - f(\xv^\star)}\,.
\end{align}
For sequences $\{\xv_t\}_{t \geq 0}$ generated by SCD it holds:
\begin{align}
 \Delta_{\rm SCD}(\xv_t) &\geq  \frac{1}{2L \norm{ \xv_t - \xv^\star }_1^2} \,, & &t \geq 0\,,
 \label{eq:steep_converg_smooth}
 \end{align}
and for a sequences generated by UCD:
\begin{align}  
 \Delta_{\rm UCD}(\xv_t) &\geq  \frac{1}{2nL \norm{ \xv_t - \xv^\star  }_2^2} \,, & &t \geq 0\,.
 \label{eq:uni_converg_smooth}
\end{align}
\end{lemma}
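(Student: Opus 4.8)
The plan is to estimate the one-step decrease from below by the quadratic-bound progress $\tau(\xv_t)$ from~\eqref{eq-quadONESTEP}--\eqref{def-ONESTEP}, then relate that to the current suboptimality gap $f(\xv_t) - f(\xv^\star)$ via convexity, and finally rearrange into the telescoping form defining $\Delta(\xv_t)$. Concretely, for SCD I would start from $\E{f(\xv_t) - f(\xv_{t+1}) \mid \xv_t} \geq \tau_{\rm SCD}(\xv_t) = \tfrac{1}{2L}\norm{\nabla f(\xv_t)}_\infty^2$. To bring in the gap, use convexity: $f(\xv_t) - f(\xv^\star) \leq \dotprod{\nabla f(\xv_t), \xv_t - \xv^\star} \leq \norm{\nabla f(\xv_t)}_\infty \norm{\xv_t - \xv^\star}_1$ by Hölder's inequality. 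Hence $\norm{\nabla f(\xv_t)}_\infty^2 \geq (f(\xv_t) - f(\xv^\star))^2 / \norm{\xv_t - \xv^\star}_1^2$, which gives
\begin{align}
\E{f(\xv_t) - f(\xv_{t+1}) \mid \xv_t} \geq \frac{(f(\xv_t) - f(\xv^\star))^2}{2L\norm{\xv_t - \xv^\star}_1^2}\,. \notag
\end{align}

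Next I would convert this into the claimed bound on $\Delta_{\rm SCD}$. Write $a := f(\xv_t) - f(\xv^\star)$ and $b := \E{f(\xv_{t+1}) - f(\xv^\star)\mid \xv_t}$, and let $D := 2L\norm{\xv_t-\xv^\star}_1^2$. The inequality above says $a - b \geq a^2/D$, equivalently (since $b \leq a$ and all quantities are positive) $\tfrac1b - \tfrac1a = \tfrac{a-b}{ab} \geq \tfrac{a^2/D}{ab} = \tfrac{a}{bD} \geq \tfrac{1}{D}$, using $a \geq b$ in the last step. This is exactly $\Delta_{\rm SCD}(\xv_t) \geq \tfrac{1}{2L\norm{\xv_t-\xv^\star}_1^2}$. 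For UCD the argument is identical except that the one-step bound is $\tau_{\rm UCD}(\xv_t) = \tfrac{1}{2nL}\norm{\nabla f(\xv_t)}_2^2$ and one uses Cauchy--Schwarz $f(\xv_t)-f(\xv^\star) \leq \norm{\nabla f(\xv_t)}_2\norm{\xv_t-\xv^\star}_2$ instead of Hölder; this yields the extra factor $n$ and the $2$-norm in~\eqref{eq:uni_converg_smooth}.

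One subtlety I want to handle carefully: the division steps assume $f(\xv_t) - f(\xv^\star) > 0$; the case $f(\xv_t) = f(\xv^\star)$ (already optimal) is trivial since then the bound holds vacuously or $\xv_t$ is a fixed point. I also need $\E{f(\xv_{t+1})-f(\xv^\star)\mid\xv_t} > 0$ to divide by $b$; if it is zero the claimed inequality holds trivially as $\Delta = +\infty - \text{finite}$. Finally, one should note the monotonicity $f(\xv_{t+1}) \leq f(\xv_t)$, which for both SCD and UCD follows directly from~\eqref{eq-Ubound} with the step $\eta = -\tfrac1L\nabla_{i_t}f(\xv_t)$ (or from exact line search), justifying $a \geq b$. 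I do not expect a genuine obstacle here; the only mildly delicate point is keeping the chain of inequalities valid when the gradient or the gap is near zero, which is dispatched by these edge-case remarks. Everything else is Hölder/Cauchy--Schwarz plus the algebraic rearrangement into reciprocals, mirroring the standard $O(1/t)$ analysis of gradient-type methods.
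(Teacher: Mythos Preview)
Your proposal is correct and follows essentially the same route as the paper: the one-step quadratic decrease from~\eqref{eq-quadONESTEP}, convexity plus H\"older (respectively Cauchy--Schwarz) to lower-bound $\norm{\nabla f(\xv_t)}_\infty^2$ (respectively $\norm{\nabla f(\xv_t)}_2^2$) by the squared gap, and then the reciprocal rearrangement. If anything, you are more explicit than the paper in justifying the passage from $a-b \geq a^2/D$ to $\tfrac1b - \tfrac1a \geq \tfrac1D$ and in handling the degenerate cases $a=0$ or $b=0$.
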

It is important to note that the lower bounds presented in Equations~\eqref{eq:steep_converg_smooth} and \eqref{eq:uni_converg_smooth} are quite tight and equality is almost achievable under special conditions. When comparing the per-step progress of these two methods, we find --- similarly as in~\eqref{eq-compareSU} --- the relation
\begin{align}
 \frac{1}{n} \Delta_{\rm SCD}(\xv_t) \leq \Delta_{\rm UCD}(\xv_t) \leq \Delta_{\rm SCD}(\xv_t)\,, \label{eq-compareDU}
\end{align}
that is, SCD can boost the performance over the random coordinate descent up to the factor of $n$. This also holds for a sequence of consecutive updates, as show in Theorem~\ref{thm:smooth_steep_converge}.

\begin{proof}[\textbf{Proof of Lemma}~\ref{lem:smooth_steepest}]
Define $f^\star := f(\xv^\star)$.
From the smoothness assumption~\eqref{eq-Ubound}, we get
\begin{align}
&f(\xv_{t+1}) \refLE{eq-quadONESTEP}  f(\xv_{t})  - \frac{1}{2L} \|\nabla f(\xv_t) \|_{\infty}^2 \notag \\
\Rightarrow~ &\big( f(\xv_{t+1}) - f^\star\big) \leq \big( f(\xv_{t}) - f^\star \big)  - \frac{1}{2L} \|\nabla f(\xv_t) \|_{\infty}^2 \label{eq:conv-sgcd-p1}
\end{align}

Now from the property of a convex function and H\"{o}lder's inequality:
\begin{align}
f(\xv_{t}) - f^\star& \leq \langle \nabla f(\xv_t), \xv_t - \xv^\star  \rangle  \leq \| \nabla f(\xv_t) \|_{\infty} \| \xv_t - \xv^\star  \|_1
\end{align}
Hence, 
\begin{align}
&\big( f(\xv_{t}) - f^\star \big)^2 \leq \| \nabla f(\xv_t) \|_{\infty}^2 \| \xv_t - \xv^\star  \|_1^2 \notag \\
\Rightarrow~ &\|\nabla f(\xv_t) \|_{\infty}^2 \geq  \frac{\big( f(\xv_{t}) - f^\star \big)^2}{\| \xv_t - \xv^\star  \|_1^2}   \label{eq:conv-sgcd-p2}
\end{align}

From Equations \eqref{eq:conv-sgcd-p1} and \eqref{eq:conv-sgcd-p2},
\begin{align}
 \frac{1}{\big( f(\xv_{t+1}) - f^\star \big)} -  \frac{1}{\big( f(\xv_{t}) - f^\star \big)} \geq \frac{1}{2L \| \xv_t - \xv^\star  \|_1^2}  \label{eq:conv-steep-coord}
\end{align}
Which concludes the proof.
\end{proof}
We like to remark, that the one step progress for UCD can be written as~\cite{nesterov2012,wright2015}:
\begin{align} \label{eq:uni_converg_smooth-again}
\frac{1}{\big( \mathbb{E} [f(\xv_{t+1}) | \xv_t] - f^\star \big)} -  \frac{1}{\big(f(\xv_{t}) - f^\star \big)} \geq \frac{1}{2Ln \| \xv_t - \xv^\star  \|_2^2} 
\end{align}

\begin{proof}[\textbf{Proof of Theorem}~\ref{thm:smooth_steep_converge}]
From Lemma~\ref{lem:smooth_steepest}, $$ \frac{1}{\big( f(\xv_{t+1}) - f^\star \big)} -  \frac{1}{\big( f(\xv_{t}) - f^\star \big)} \geq \frac{1}{2L \| \xv_t - \xv^\star  \|_1^2} $$
Now summing up the above equation for $t =0$ till $t-1$, we get:
\begin{align*}
& \frac{1}{\big( f(\xv_{t}) - f^\star \big)}  - \frac{1}{\big( f(\xv_{0}) - f^\star \big)}  \geq \frac{1}{2L} \sum_{i = 0}^{t-1} \frac{1}{\| \xv_t - \xv^\star \|_1^2} \\
 \Rightarrow~ & \frac{1}{\big( f(\xv_{t}) - f^\star \big)}   \geq \frac{1}{2L} \sum_{i = 0}^{t-1} \frac{1}{\| \xv_0 - \xv^\star \|_1^2} \\
 \Rightarrow~ & \frac{1}{\big( f(\xv_{t}) - f^\star \big)}   \geq \frac{t}{2L R_1^2}  \\
  \Rightarrow~ & f(\xv_t) - f^\star \leq \frac{2 L R_1^2}{t}  
\end{align*}
Which concludes the proof.
\end{proof}

\subsection{Lower bounds}
\label{sec-lowerbound-proof}
In this section we provide the proof of Theorem~\ref{thm-lowerbound}. 
Our result is slightly more general, we will proof the following (and Theorem~\ref{thm-lowerbound} follows by the choice $\alpha = 0.01 < \frac{1}{3}$).

\begin{theorem}
\label{thm-lowerbound2}
Consider the function $q(\xv) = \frac{1}{2} \dotprod{Q\xv,\xv}$ for $Q := (\alpha -1)\frac{1}{n} J_n + I_n$, where $J_n = \bm{1}_n \bm{1}_n^T$ and $0 < \alpha < \frac{1}{2}$, $n > 2$. Then there exists $\xv_0 \in R^n$ such that for the sequence $\{\xv_t\}_{t \geq 0}$ generated by SCD it holds
\begin{align}
 \norm{ \nabla q(\xv_t) }_\infty^2 \leq 
 \frac{3 + 3 \alpha }{n} 
  \norm{ \nabla q(\xv_t) }_2^2\,. \label{eq-lowerboundratio2}
\end{align}
\end{theorem}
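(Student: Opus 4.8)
The plan is to make explicit the dynamics of SCD on the quadratic $q(\xv) = \tfrac12\dotprod{Q\xv,\xv}$ with $Q = (\alpha-1)\tfrac1n J_n + I_n$, and to exhibit an initial point $\xv_0$ that makes SCD cycle through the coordinates in a predictable geometric fashion. First I would compute the gradient: $\nabla q(\xv) = Q\xv = \xv - \tfrac{1-\alpha}{n}(\one^T\xv)\one$, so that $\nabla_i q(\xv) = [\xv]_i - \tfrac{1-\alpha}{n}\sigma$ where $\sigma := \one^T\xv$. The key structural observation is that after a coordinate step, the vector of gradient components is just the current iterate shifted by a common scalar; in particular the ranking of $\abs{\nabla_i q(\xv)}$ and the size of the steepest component are controlled by $[\xv]_i - \tfrac{1-\alpha}{n}\sigma$. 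An SCD step on coordinate $i$ sets $[\xv_{t+1}]_i = [\xv_t]_i - \tfrac{1}{L}\nabla_i q(\xv_t)$ with $L = 1$ here (the diagonal of $Q$ is $1$, and $q$ is coordinate-wise $1$-smooth), i.e. it replaces $[\xv_t]_i$ by $\tfrac{1-\alpha}{n}\sigma_t$, which also updates $\sigma$.

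Next I would guess the ansatz $[\xv_0]_i = c_\alpha^{\,i-1}$ for a constant $c_\alpha \in (0,1)$ to be determined, and verify self-consistency: I want SCD to select coordinate $1$ first, and the update to multiply $[\xv_0]_1$ by $c_\alpha^{\,n}$ (so the new value equals $c_\alpha^{\,n}\cdot 1 = c_\alpha^{\,n}$, which is exactly the geometric continuation of the sequence $1, c_\alpha, \dots, c_\alpha^{n-1}$ shifted by one full cycle). Writing out the fixed-point condition — that $\tfrac{1-\alpha}{n}\sigma_0 = c_\alpha^{\,n}$, with $\sigma_0 = \sum_{j=0}^{n-1} c_\alpha^{\,j} = \tfrac{1-c_\alpha^{\,n}}{1-c_\alpha}$ — gives one scalar equation for $c_\alpha$; I would check it has a root in $(0,1)$ for $0<\alpha<\tfrac12$, $n>2$, and that with this $c_\alpha$ coordinate $1$ indeed attains the max of $\abs{\nabla_i q(\xv_0)}$ (monotonicity of $c_\alpha^{\,i-1}$ and the sign/magnitude bookkeeping on $c_\alpha^{\,i-1} - c_\alpha^{\,n}$ make this a short comparison). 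By the shift-invariance of the whole configuration under "drop the first coordinate value, append $c_\alpha^{\,n}\cdot(\text{old first})$", the induction propagates: $\xv_t$ is, up to the cyclic relabeling in the theorem statement, the geometric vector again, so $[\xv_t]_{1+(t-1 \bmod n)} = c_\alpha^{\,n}\cdot[\xv_{t-1}]_{1+(t-1\bmod n)}$ as claimed in the sketch.

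Finally, once the orbit is pinned down I would bound the ratio $\norm{\nabla q(\xv_t)}_\infty^2 / \norm{\nabla q(\xv_t)}_2^2$ directly on this orbit. Since every $\xv_t$ on the orbit is (a permutation of) a geometric vector $(c_\alpha^{\,a}, c_\alpha^{\,a+1}, \dots)$ with the cyclic wrap, the gradient entries are $c_\alpha^{\,j} - \tfrac{1-\alpha}{n}\sigma$ for $j$ ranging over a block of $n$ consecutive integers; here $\sigma$ stays essentially $\tfrac{n}{1-\alpha}c_\alpha^{\,n}\cdot(\text{const})$ by the fixed-point relation, so $\tfrac{1-\alpha}{n}\sigma$ is comparable to $c_\alpha^{\,n}$, which is tiny compared to the largest entry $c_\alpha^{\,a}\approx 1$. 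Hence $\norm{\nabla q(\xv_t)}_\infty \le 1$ while $\norm{\nabla q(\xv_t)}_2^2 = \sum_j (c_\alpha^{\,j} - \tfrac{1-\alpha}{n}\sigma)^2$; the point is that $c_\alpha$ is bounded away from $0$ (not going to $0$ with $n$ — it tends to a limit like $1/2$ for small $\alpha$), so a constant fraction of the $n$ entries have magnitude $\Theta(1)$, forcing $\norm{\nabla q(\xv_t)}_2^2 = \Theta(n)$ and giving the ratio $O(1/n)$ with the explicit constant $\tfrac{3+3\alpha}{n}$ after crude estimation (e.g. bounding $\sum c_\alpha^{2j} \ge 1$ and $\sum c_\alpha^{2j} \ge$ geometric-series lower bound times $n$-dependent count, and bounding the cross terms using $\tfrac{1-\alpha}{n}\sigma \le c_\alpha^{\,n}\le$ small). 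The main obstacle I anticipate is the first part: showing the fixed-point equation for $c_\alpha$ has a solution in $(0,1)$ with the correct argmax property robustly for all admissible $\alpha, n$ — in particular controlling the sign pattern of $c_\alpha^{\,i-1} - \tfrac{1-\alpha}{n}\sigma$ across the coordinate block so that the SCD rule genuinely picks the intended coordinate at every step of the cycle. Once that combinatorial/monotonicity bookkeeping is in place, the norm-ratio estimate is routine.
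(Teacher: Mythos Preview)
Your overall architecture---choose a geometric initial point $[\xv_0]_i = c_\alpha^{\,i-1}$, pin down $c_\alpha$ by a fixed-point equation so that SCD cycles and contracts the active coordinate by a factor $c_\alpha^{\,n}$, then evaluate the $\ell_\infty/\ell_2$ ratio on the resulting orbit---matches the paper exactly. Two points need correcting, one minor and one substantive.

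\textbf{Minor.} The diagonal of $Q$ is $Q_{ii}=1+\tfrac{\alpha-1}{n}$, not $1$, so the SCD step (as in~\eqref{eq:grad_update}) divides by $Q_{11}$ rather than by $1$. This shifts your fixed-point equation from $c_\alpha^{\,n}=\tfrac{1-\alpha}{n}S_n(c_\alpha)$ to the paper's $c_\alpha^{\,n-1}=\tfrac{1-\alpha}{n}S_n(c_\alpha)$. The structure of the argument is unaffected.

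\textbf{Substantive.} Your intuition about the size of $c_\alpha$ is backward, and the norm-ratio step as you describe it would not go through. You write that $c_\alpha$ ``tends to a limit like $1/2$'' and that $c_\alpha^{\,n}$ is ``tiny''. In fact $c_\alpha\to 1$ as $n\to\infty$; the paper proves $c_\alpha\ge 1-\tfrac{3\alpha}{n}$ (Lemma~\ref{lemma-lowbound1}), and your own fixed-point equation forces the same behaviour: if $c$ stayed bounded away from $1$ then $c^n$ would decay exponentially while $\tfrac{1-\alpha}{n}S_n(c)$ decays only like $1/n$, so no root exists there. Consequently $c_\alpha^{\,n}\approx e^{-O(\alpha)}$ is $\Theta(1)$, not tiny, and the gradient entries $[\gv]_i=c_\alpha^{\,i-1}-c_\alpha^{\,n}$ are \emph{not} close to $c_\alpha^{\,i-1}$. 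If $c_\alpha$ were really $1/2$, then $\sum_i [\gv]_i^2$ would be a convergent geometric series, i.e.\ $O(1)$, and the claimed $\Theta(n)$ lower bound on $\norm{\gv}_2^2$ would fail outright.

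The correct mechanism is the opposite one: because $c_\alpha=1-O(1/n)$, the values $c_\alpha^{\,i-1}$ for $i=1,\dots,n$ fill the interval $[c_\alpha^{\,n-1},1]\subset[e^{-O(\alpha)},1]$ roughly evenly, so the differences $c_\alpha^{\,i-1}-c_\alpha^{\,n-1}$ range essentially linearly from $0$ up to $1-c_\alpha^{\,n-1}=\Theta(1)$. That is what gives $\norm{\gv}_2^2=\Theta(n)$ while $\norm{\gv}_\infty^2=\Theta(1)$. Getting the explicit constant $3+3\alpha$ (Lemma~\ref{lemma-lowbound2}) requires this sharp control on $c_\alpha$; ``crude estimation'' from the wrong regime will not produce it.
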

In the proof below we will construct a special $\xv_0 \in R^n$ that has the claimed property. However, we would like to remark that this is not very crucial. We observe that for functions as in Theorem~\ref{thm-lowerbound2} almost any initial iterate ($\xv$ not aligned with the coordinate axes) the sequence $\{\xv_t\}_{t \geq 0}$ of iterates generated by SCD suffers from the same issue, i.e. relation~\eqref{eq-lowerboundratio2} holds for iteration counter $t$ sufficiently large. We do not prove this formally, but demonstrate this behavior in Figure~\ref{fig:lb}. We see that the steady state is almost reached after $2n$ iterations.

\begin{figure}
\begin{center}
\includegraphics[scale=.5]{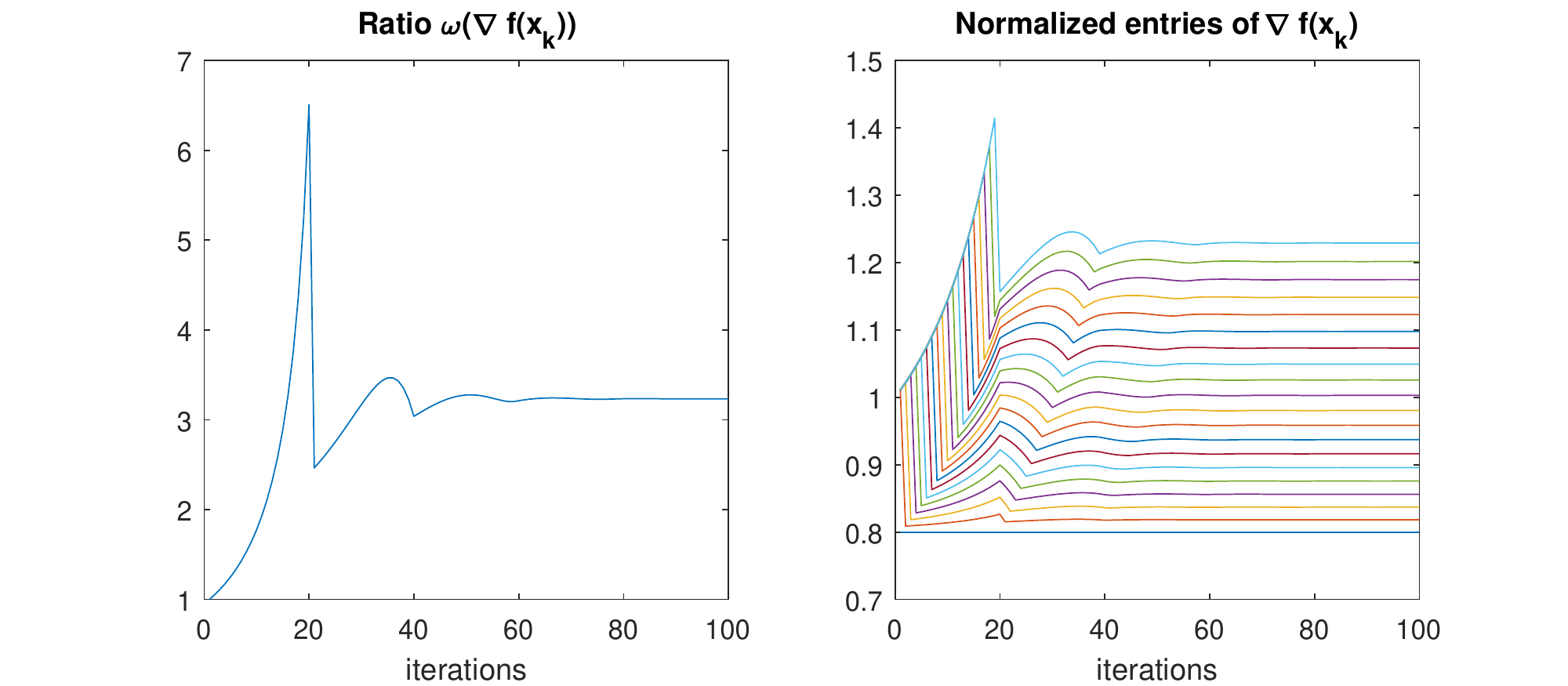}
\end{center}
\caption{SCD on the function from Theorem~\ref{thm-lowerbound2} in dimension $n= 20$ with $\xv_0 = \bm{1}_n$ (i.e. not the worst starting point constructed in the proof of Theorem~\ref{thm-lowerbound2}). On the right the (normalized and sorted) components of $\nabla f(\xv_t)$.}
\label{fig:lb}
\end{figure}

\begin{proof}[Proof of Theorem~\ref{thm-lowerbound2}]
Define the parameter $c_\alpha$ by the equation
\begin{align}
 \left( 1 + \frac{ \alpha-1}{n} \right) c_\alpha^{n-1} = \left( \frac{1- \alpha}{n} \right) S_{n-1}(c_\alpha) \label{eq-defC} \\
 c_\alpha^{n-1} = \left( \frac{1- \alpha}{n} \right) S_{n}(c_\alpha)
\end{align}
where $S_n(c_\alpha) = \sum_{i=0}^{n-1} c_\alpha^n$; and define $\xv_0$ as $[\xv_0]_i = c_\alpha^{i-1}$ for $i=1,\dots,n$.  In Lemma~\ref{lemma-lowbound1} below we show that $c_\alpha \geq 1 - \frac{3}{n}\alpha$. 

We now show that SCD cycles through the coordinates, i.e. the sequence $\{\xv_t\}_{t \geq 0}$ generated by SCD satisfies 
\begin{align}
[\xv_t]_{1 + (t-1 \mod n)} = c_\alpha^n \cdot [\xv_{t-1}]_{1 + (t-1 \mod n)}\,. \label{eq-cycling}
\end{align}
Observe $\nabla f(\xv_0) = Q\xv_0$. Hence the GS rule picks $i_1=1$ in the first iteration. The iterate is updated as follows:
\begin{align}
 [\xv_{1}]_1 &\refEQ{eq:grad_update}[\xv_{0}]_1  - \frac{[Q\xv_0]_1}{Q_{11}} \\
    & = 1 - \frac{(\alpha -1)\frac{1}{n} S_{n}(c_\alpha) + 1}{(\alpha -1)\frac{1}{n} + 1} \\
    & = \frac{(\alpha -1)\frac{1}{n} \left(1 - S_{n}(c_\alpha) \right)}{(\alpha -1)\frac{1}{n} + 1} \\
    & = \frac{(\alpha -1)\frac{1}{n} \left(c_\alpha^n - c_\alpha S_{n}(c_\alpha) \right)}{(\alpha -1)\frac{1}{n} + 1}  \\
    & \refEQ{eq-defC} \frac{(\alpha -1)\frac{1}{n} c_\alpha^n  + c_\alpha^n}{(\alpha -1)\frac{1}{n} + 1} = c_\alpha^n
\end{align}
The relation~\eqref{eq-cycling} can now easily be checked by the same reasoning and induction. 

It remains to verify that for this sequence property~\eqref{eq-lowerboundratio2} holds. This is done in Lemma~\ref{lemma-lowbound2}. Note that $\nabla f(\xv_0) = Q \xv_0 = \gv$, where $\gv$ is defined as in the lemma, and that all gradients $\nabla f(\xv_t)$ are up to scaling and reordering of the coordinates equivalent to the vector $\gv$.
\end{proof}

\begin{lemma}
\label{lemma-lowbound1}
Let $0 < \alpha < \frac{1}{2}$ and $0 < c_\alpha <1$ defined by  equation~\eqref{eq-defC}, where $S_n(c_\alpha) = \sum_{i=0}^{n-1} c_\alpha^n$. Then $c_\alpha \geq 1 - \frac{4}{n}\alpha$ for $\alpha \in [0, \frac{1}{2}]$.
\end{lemma}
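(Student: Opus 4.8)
The plan is to analyze the defining equation $\left(1 + \frac{\alpha-1}{n}\right)c_\alpha^{n-1} = \frac{1-\alpha}{n} S_{n-1}(c_\alpha)$ (equivalently $c_\alpha^{n-1} = \frac{1-\alpha}{n}S_n(c_\alpha)$) directly, treating $c_\alpha$ as the unique root in $(0,1)$ of a suitable scalar function and showing this root cannot drop below $1 - \frac{4\alpha}{n}$. First I would rewrite the equation in the cleanest possible form: since $S_n(c) = \frac{1-c^n}{1-c}$ for $c \neq 1$, the relation $c^{n-1} = \frac{1-\alpha}{n}\cdot\frac{1-c^n}{1-c}$ becomes $n(1-c)c^{n-1} = (1-\alpha)(1-c^n)$. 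Define $\varphi(c) := (1-\alpha)(1-c^n) - n(1-c)c^{n-1}$; then $c_\alpha$ is the root of $\varphi$ in $(0,1)$, and I would check $\varphi(1) = 0$ trivially (both sides vanish), $\varphi$ is positive near $0$, and that there is exactly one root in $(0,1)$ by a monotonicity/sign argument on $\varphi$ or its derivative, so that to prove $c_\alpha \geq 1-\frac{4\alpha}{n}$ it suffices to show $\varphi\bigl(1-\tfrac{4\alpha}{n}\bigr) \leq 0$ (with the sign conventions lined up so that $\varphi \le 0$ forces the root to lie to the right).

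The core of the argument is then the estimate $\varphi(c) \leq 0$ at $c = 1 - \frac{4\alpha}{n}$. I would substitute $c = 1-\epsilon$ with $\epsilon = \frac{4\alpha}{n} \in [0, \frac{2}{n}]$ and expand both terms. For the first term, $(1-\alpha)(1-(1-\epsilon)^n)$: use the bound $1-(1-\epsilon)^n \leq n\epsilon$ (convexity / Bernoulli), giving $(1-\alpha)(1-(1-\epsilon)^n) \le (1-\alpha)n\epsilon$. For the second term, $n\epsilon(1-\epsilon)^{n-1}$: I need a \emph{lower} bound on $(1-\epsilon)^{n-1}$; since $n\epsilon = 4\alpha \le 2$, I would use $(1-\epsilon)^{n-1} \ge 1 - (n-1)\epsilon \ge 1 - n\epsilon = 1 - 4\alpha$, or a sharper second-order bound $(1-\epsilon)^{n-1} \ge e^{-(n-1)\epsilon/(1-\epsilon)} \ge$ something explicit, whichever makes the inequality close. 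Then $\varphi(1-\epsilon) \le n\epsilon\bigl[(1-\alpha) - (1-4\alpha)\bigr] = n\epsilon \cdot 3\alpha$, which is unfortunately \emph{positive}, so the crude first-order bounds are not tight enough and I would need the next-order correction: use $1-(1-\epsilon)^n \le n\epsilon - \binom{n}{2}\epsilon^2 + \binom{n}{3}\epsilon^3$ (alternating, so truncating after the quadratic term overestimates; I'd keep the cubic or bound it) and similarly expand $(1-\epsilon)^{n-1}$ to second order, so that the $\alpha^2$-scale terms ($\binom{n}{2}\epsilon^2 = \Theta(\alpha^2)$) cancel the spurious $3\alpha \cdot \epsilon = \Theta(\alpha^2/n)$ — actually the dominant correction is $O(\alpha^2)$ versus a main term of size $O(\alpha^2/n)$, so for $n$ large enough the quadratic correction wins.

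The main obstacle, as the sketch above shows, is that the \emph{first-order} expansion is genuinely insufficient — it gives the wrong sign — so the proof hinges on carefully controlled \emph{second-order} Taylor estimates of $(1-\epsilon)^n$ and $(1-\epsilon)^{n-1}$ that are uniform in $n > 2$ and $\alpha \in [0,\tfrac12]$, and showing the quadratic terms dominate the leftover linear discrepancy. (This also explains why the appendix lemma is stated with the looser constant $\frac{4}{n}$ while the proof of Theorem \ref{thm-lowerbound2} only invokes $c_\alpha \ge 1 - \frac{3}{n}\alpha$: the slack between $3$ and $4$ is exactly the room needed to absorb the second-order error terms cleanly.) A cleaner alternative I would try first is to bound $S_n(c_\alpha)$ and $S_{n-1}(c_\alpha)$ two-sidedly by geometric-series / integral estimates and solve the resulting inequality for $c_\alpha$ monotonically, which may sidestep the delicate cancellation; but if that fails, the explicit Taylor route above is the fallback, and I expect roughly one page of routine but careful inequality manipulation, with the $n=3$ (and small $n$) cases possibly checked separately since asymptotic-in-$n$ bounds are weakest there.
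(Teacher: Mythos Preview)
Your overall strategy --- rewrite the defining relation as $\varphi(c) = (1-\alpha)(1-c^n) - n(1-c)c^{n-1} = 0$ and then evaluate $\varphi$ at the candidate $c_0 = 1 - \tfrac{4\alpha}{n}$ via Taylor expansion --- is exactly the paper's approach. The paper sets $\Psi(c) = 1 - \tfrac{n(1-c)c^{n-1}}{1-c^n}$, notes $\Psi(c_\alpha)=\alpha$, and then simply asserts ``with Taylor expansion we observe $\Psi(1-\tfrac{3\alpha}{n}) \ge \alpha$'' without any of the details you work through; your $\varphi(c)\ge 0$ is equivalent to $\Psi(c)\ge\alpha$, so you are supplying more of the computation than the paper does.

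There is, however, a genuine sign error in your reduction. You correctly note that $\varphi > 0$ near $0$ and has a unique root $c_\alpha \in (0,1)$, hence $\varphi > 0$ on $(0,c_\alpha)$ and $\varphi < 0$ on $(c_\alpha,1)$. But then $\varphi(c_0) \le 0$ places $c_0$ to the \emph{right} of the root, i.e.\ $c_\alpha \le c_0$, which is the opposite of what you want. To prove $c_\alpha \ge 1-\tfrac{4\alpha}{n}$ you must show $\varphi\bigl(1-\tfrac{4\alpha}{n}\bigr) \ge 0$ (equivalently $\Psi(c_0) \ge \alpha$, which is precisely the paper's claim). This flips which direction of every elementary estimate you need: you should \emph{lower}-bound $(1-\alpha)(1-c^n)$ and \emph{upper}-bound $n(1-c)c^{n-1}$, not the reverse.

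With that single correction your second-order plan goes through cleanly. Expanding to second order gives
\[
\varphi(1-\epsilon) \;=\; -\,n\epsilon\,\alpha \;+\; \tfrac{n(n-1)(1+\alpha)}{2}\,\epsilon^2 \;+\; O(\epsilon^3),
\]
and at $\epsilon = \tfrac{4\alpha}{n}$ this equals $4\alpha^2\bigl[\tfrac{2(n-1)(1+\alpha)}{n}-1\bigr]$, which is positive for all $n>2$ and $\alpha\in[0,\tfrac12]$. So the ``unfortunately positive'' sign you flagged in the first-order remnant is in fact exactly the sign you need; the obstacle you diagnosed was an artifact of aiming at the wrong inequality.
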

\begin{proof}
Using the summation formula for geometric series, $S_n(c_\alpha) = \frac{1 - c_\alpha^n}{1-c_\alpha}$ we derive
\begin{align}
 \alpha \refEQ{eq-defC} 1 - \frac{n c_\alpha^{n-1}}{S_n(c_\alpha)} = \underbrace{1 - \frac{n (1- c_\alpha) c_\alpha^{n-1}}{1 - c_\alpha^{n}}}_{:= \Psi(c_\alpha)} \,. \label{eq-cA}
\end{align} 
With Taylor expansion we observe that
\begin{align}
\Psi \left( 1 - \frac{3\alpha}{n} \right) &\geq \alpha\,, &   \Psi  \left( 1 - \frac{2\alpha}{n} \right) &\leq \alpha
\end{align}
where the first inequality only hold for $n > 2$ and $ \alpha \leq \in [0, \frac{1}{2}]$. Hence any solution to~\eqref{eq-cA} must satisfy $c_\alpha \geq 1 - \frac{3}{n}\alpha$.
\end{proof}

\begin{lemma}
\label{lemma-lowbound2}
Let $c_\alpha$ as in~\eqref{eq-defC}. Let $\gv \in \R^n$ be defined as
\begin{align}
[\gv]_i =\frac{ (\alpha - 1) \frac{1}{n} S_n (c_\alpha) + c_\alpha^{i-1}}{ 1 + \frac{ \alpha-1}{n}  }
\end{align} Then 
\begin{align}
\max_{i \in [n]} \frac{\norm{\gv}_\infty^2}{\frac{1}{n}\norm{\gv}_2^{2}} \leq 3 + 3\alpha \,.\
\end{align}
\end{lemma}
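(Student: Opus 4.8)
The plan is to exploit the explicit geometric structure of the vector $\gv$. Write $A := (\alpha-1)\frac{1}{n} S_n(c_\alpha)$ and $B := 1 + \frac{\alpha-1}{n}$, so that $[\gv]_i = (A + c_\alpha^{i-1})/B$ for $i=1,\dots,n$. First I would use the defining equation~\eqref{eq-defC}, in the form $B\, c_\alpha^{n-1} = \frac{1-\alpha}{n} S_n(c_\alpha) = -A$, to see that $[\gv]_n = (A + c_\alpha^{n-1})/B = c_\alpha^{n-1} - \tfrac{A}{B}\bigl(\tfrac{1}{1}-\cdots\bigr)$; more usefully, this identity shows $A = -B c_\alpha^{n-1}$, hence $[\gv]_i = c_\alpha^{i-1} - c_\alpha^{n-1}$ up to the common positive factor $1/1$... wait — more precisely $[\gv]_i = (c_\alpha^{i-1} - B c_\alpha^{n-1})/B$, but since every entry shares the denominator $B$ it drops out of the scale-invariant ratio $\norm{\gv}_\infty^2/(\tfrac1n\norm{\gv}_2^2)$. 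So I may replace $\gv$ by $\hv$ with $[\hv]_i = c_\alpha^{i-1} - c_\alpha^{n-1}$ (using $A = -Bc_\alpha^{n-1}$), or even more simply by noting each entry is an affine-shifted power of $c_\alpha$; in any case the ratio only depends on $c:=c_\alpha$ and $n$.

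Next I would bound numerator and denominator of the (rescaled) ratio. Since $0 < c < 1$, the entries $c^{i-1} - c^{n-1}$ are nonnegative and decreasing in $i$, so $\norm{\hv}_\infty^2 = (1 - c^{n-1})^2$. For the denominator I need a lower bound on $\sum_{i=1}^n (c^{i-1}-c^{n-1})^2$. Expanding, this equals $\sum_{i=1}^n c^{2(i-1)} - 2c^{n-1}\sum_{i=1}^n c^{i-1} + n c^{2(n-1)} = S_n(c^2) - 2c^{n-1}S_n(c) + nc^{2(n-1)}$, which I can write in closed form via geometric sums. The target inequality $\norm{\hv}_\infty^2 \le \frac{3+3\alpha}{n}\norm{\hv}_2^2$ then becomes a one-variable inequality in $c$ (with $n$ as parameter), and I would discharge it using the bound $c = c_\alpha \ge 1 - \frac{3\alpha}{n}$ from Lemma~\ref{lemma-lowbound1} (note the lemma statement writes $1-\frac4n\alpha$ but the proof gives $1-\frac3n\alpha$; I would use whichever constant is actually needed and is consistent). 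The key elementary estimates are: $1 - c^{n-1} \le (n-1)(1-c) \le 3\alpha$ from $c \ge 1-\frac{3\alpha}{n}$, giving $\norm{\hv}_\infty^2 \le 9\alpha^2$ — actually this is too crude for small $n$; instead I would keep $\norm{\hv}_\infty^2 = (1-c^{n-1})^2$ exact and instead lower-bound $\tfrac1n\norm{\hv}_2^2$ by a constant multiple of $(1-c^{n-1})^2$.

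The cleanest route, and the one I would push on, is to lower-bound $\frac1n\sum_{i=1}^n(c^{i-1}-c^{n-1})^2$ by comparing with the single largest term scaled appropriately: since the sequence $c^{i-1}-c^{n-1}$ decays geometrically with ratio close to $1$ (because $c$ is close to $1$), a constant fraction of the first $\Theta(n)$ terms are each at least a constant fraction of $1-c^{n-1}$; summing gives $\frac1n\norm{\hv}_2^2 \ge c'(1-c^{n-1})^2$ for an absolute constant $c' \ge \frac{1}{3(1+\alpha)}$. Concretely I would split: for $i \le n/2$, $c^{i-1} - c^{n-1} \ge c^{i-1}(1 - c^{n/2})$, and bound $1 - c^{n/2}$, $c^{i-1}$ from below using $c \ge 1 - \frac{3\alpha}{n}$ and $\alpha < \frac12$. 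The main obstacle I anticipate is precisely calibrating these constants so that the final bound is exactly $3 + 3\alpha$ and not merely $O(1)$ — getting the constant sharp will require either carefully chosen Taylor/Bernoulli-type inequalities for $c^k$ with $c = 1 - \Theta(\alpha/n)$, or a direct manipulation of the closed-form geometric sums combined with the exact relation~\eqref{eq-defC} rather than just the inequality $c \ge 1 - \frac{3\alpha}{n}$. I would try the closed-form-geometric-sum route first, since~\eqref{eq-defC} gives an exact handle on $S_n(c)$ that may make the $3+3\alpha$ constant fall out naturally.
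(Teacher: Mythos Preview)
Your approach is essentially the same as the paper's: use the defining relation~\eqref{eq-defC} to simplify $[\gv]_i$ to a positive multiple of $c_\alpha^{i-1}-c_\alpha^{n-1}$, then compute the ratio $\omega(\gv)=[\gv]_1^2\big/\bigl(\tfrac1n\sum_i [\gv]_i^2\bigr)$ as an explicit rational function of $c_\alpha$ via geometric sums, and finally bound it using $c_\alpha\ge 1-\tfrac{3\alpha}{n}$ from Lemma~\ref{lemma-lowbound1}. One small slip: from the second line of~\eqref{eq-defC} one gets $A=(\alpha-1)\tfrac1n S_n(c_\alpha)=-c_\alpha^{\,n-1}$, not $-B\,c_\alpha^{\,n-1}$; with the correct identity you land directly on $[\gv]_i=(c_\alpha^{i-1}-c_\alpha^{n-1})/B$, which is exactly the form you then use. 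As for your worry about calibrating the constant to exactly $3+3\alpha$: the paper takes the closed-form route you identify as your first choice and simply asserts the final estimate from the rational expression together with $c_\alpha\ge 1-\tfrac{3\alpha}{n}$ and $\alpha\le\tfrac12$, without spelling out the intermediate elementary inequalities.
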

\begin{proof}
Observe
\begin{align}
 [\gv]_i &= \frac{ (\alpha - 1) \frac{1}{n} \left( S_{n-1} (c_\alpha) + c_\alpha^{n-1} \right)+ c_\alpha^{n-1} + (c_\alpha^{i-1} - c_\alpha^{n-1})}{ 1 + \frac{ \alpha-1}{n}  } \\ 
 &\refEQ{eq-defC}  \frac{c_\alpha^{i-1} - c_\alpha^{n-1}}{ 1 + \frac{ \alpha-1}{n}  }
\end{align}
Thus $[\gv]_1 > [\gv]_2 > \dots > [\gv]_n$ and the maximum is attained at
\begin{align}
\omega(\gv) := \frac{[\gv]_1^2}{\frac{1}{n} \sum_{i=1}^n [\gv]_i^2} &= 
 \frac{c_\alpha^{2} \left(c_\alpha^{2}-1 \right)  \left(1-c_\alpha^{n-1} \right)^2 n}{2 c_\alpha^{n+1}  + 2 c_\alpha^{n + 2} -2 c_\alpha^{2n + 1} + (n-1) c_\alpha^{2n+2} -c_\alpha^{2} -nc_\alpha^{2n} }
\end{align}
For $c_\alpha \geq 1 - \frac{3}{n}\alpha$  and $\alpha \leq \frac{1}{2}$, this latter expression can be estimated as
\begin{align}
\omega(\gv) \leq 3 + 3\alpha
\end{align}
especially $\omega(\gv) \leq 4$ for $\alpha \leq \frac{1}{3}$.
\end{proof}

\section{Approximate Gradient Update} \label{appen:formulation}
In this section we will prove Lemma~\ref{lem:gen_grad_update}. Consider first the following simpler case, where we assume $f$ is given as in least squares, i.e. $f(\xv) := \frac{1}{2} \norm{A \xv - \bv}^2$. 

In the $t_{th}$ iteration, we choose coordinate $i_t$ to optimize upon and the update from $\xv_{t+1}$ to $\xv_t$ can be written as $\xv_{t+1} = \xv_t + \gamma_t \evv_{i_t}$. Now for any coordinate $i$ other than $i_t$, it is fairly easy to compute the change in the gradient of the other coordinates. We already observed that $[\xv_t]_j$ does not change, hence the sub-gradient set of $\Psi_{j}([\xv_t]_j)$ and $\Psi_{j}([\xv_{t+1}]_j)$  are equal. For the change in $\nabla f$, consider the analysis below:
\begin{align}
\nabla_i F(\xv_{t+1}) - \nabla_i F(\xv_{t}) &= \av_i ^\top (A \xv_{t+1} - b) - \av_i ^\top (A \xv_t - b)  \\
&=  \av_i ^\top \big(A( \xv_{t+1} - \xv_{t} )  \big )   \label{eq:31part1}\\
&= \av_i ^\top \big(A(\xv_{t} + \gamma_t \evv_{i_t} - \xv_{t} )  \big ) \label{eq:31part2} \\
&= \av_i ^\top \big(\gamma A \evv_{i_t}  \big ) = \gamma_t \av_i ^\top \av_{i_t}
\end{align}
Equation \eqref{eq:31part2} comes from the update of $\xv_t$ to $\xv_{t+1}$.

By the same reasoning, we can now derive the general proof.

\begin{proof}[\textbf{Proof of Lemma~}\ref{lem:gen_grad_update}]
Consider a composite function $F$ as given in Lemma~\ref{lem:gen_grad_update}. By the same reasoning as above, the two sub-gradient sets of $\Psi_{j}([\xv_t]_j)$ and $\Psi_{j}([\xv_{t+1}]_j)$  are identical, for every passive coordinate $j \neq i_t$. The gradient of $F$ can be written as:
\[
\nabla_i F(\alphav_t) = \av_i ^\top \nabla f(A\alphav_t)
\]
For any arbitrary passive coordinate $j \neq i_t$ the change of the gradient can be computed as follows: 
\begin{align}
 \nabla_j F(\xv_{t+1}) - \nabla_j F(\xv_t)  
&=  \av_j ^\top \nabla f(A \xv_{t+1})  -  \av_j ^\top \nabla f(A \xv_t) \notag  \\
&=  \av_j ^\top  ( \nabla f(A \xv_{t+1})  -\nabla f(A \xv_t) ) \label{eq:32part1} \\
&=  \av_j ^\top \Big ( \nabla f\big (A(\xv_{t} + \gamma_t \evv_{i_t})\big)  -\nabla f\big(A \xv_t\big) \Big ) \notag \\ 
&\stackrel{\ast}{=} \big \langle A^\top \nabla^2 f(A \tilde{\xv}) \av_j, \xv_{t+1} - \xv_{t} \big \rangle  \notag \\
& =  \big \langle \gamma_t  \nabla^2 f(A \tilde{\xv}) \av_j , A ( \xv_{t+1} - \xv_{t}) \big \rangle \notag \\
&= \gamma_t \av_j^\top  \nabla^2 f(A\tilde{\xv}) \av_{i_t}  \label{eq:approx_general-again}
\end{align}
Here $\tilde{\xv}$ is a point on the line segment between $[\xv_t]_{i_t}$ and $[\xv_{t+1}]_{i_t}$ which can be found by the Mean Value Theorem.
\end{proof}

\section{Algorithm and Stability}
\begin{proof}[\textbf{Proof of Theorem}~\ref{thm-competitiveRatio2}]
As we are interested to study the expected competitive ration $\E{\rho_t}$ for $t \to \infty$, we can assume mixing and consider only the steady state. 

Define $\alpha_t \in [0,1]$ s.t. $\alpha_t(n-s) = \abs{\{i \in \mathcal{I}_t \mid i > s\}}$. I.e. $\alpha_t(n-s)$ denotes the number of indices in $\abs{\mathcal{I}_t}$ which do not belong to the set $[s]$.

Denote $\alpha_\infty := \lim_{t \to \infty} \alpha_t$. 
By equilibrium considerations, the probability that an index $i \notin [s]$ gets picked (and removed from the active set), i.e. $1 - \rho_\infty$, must be equal to the probability that an index $j \notin [s]$ enters the active set. Hence
\begin{align}
 \frac{(1-\alpha_\infty)(n-s)}{T_\infty} = 1 - \rho_\infty = \frac{\alpha_\infty (n-s)}{\alpha_\infty (n-s) +c s }\,.
\label{eq-equilib}
\end{align}
We deduce the quadratic relation $ \alpha_\infty T_\infty = (1- \alpha_\infty) \left(  \alpha_\infty  (n-s) + cs\right)$ with solution
\begin{align}
 \alpha_\infty = \frac{n-(1+c)s - T_\infty + \sqrt{n^2 + (c-1)^2s^2 + 2n((c-1)s - T_\infty) + 2(1+c)sT_\infty +T_\infty^2}}{2(n-s)}\,.
 \label{eq-solveA}
\end{align}
Denote $\theta := n^2 + (c-1)^2s^2 + 2n((c-1)s - T_\infty) + 2(1+c)sT +T_\infty^2$. Hence,
\begin{align}
\rho_\infty \refEQ{eq-equilib} \frac{cs}{\alpha_\infty (n-s) + cs} \refEQ{eq-solveA} \frac{2 c s}{cs + n -s - T_\infty + \sqrt{\theta}}\,.
\end{align}
We now verify the provided lower bound on $\rho_\infty$:
\begin{align}
\rho_\infty \refEQ{eq-equilib}  1 - \frac{(1-\alpha_\infty)(n-s)}{T_\infty} \geq 1 - \frac{n-s}{T_\infty}\,.
\end{align}
This bound is sharp for large values of $T_\infty$, ($T_\infty > 2n$, say), but trivial for $T_\infty \leq n-s$.
\end{proof}

\section{GS rule for Composite Functions}
\subsection{GS-q rule}
\label{sec:GSq}
In this section we show how ASCD can be implemented for the GS-q rule.
Define the coordinate-wise model
\begin{align}
 V_i(\xv,y,s) := sy + \frac{L}{2}y^2 + \Psi_i(x_i+y)
\end{align}
The GS-q rule is defined as (cf.~\citet{nutini2015coordinate})
\begin{align}
 i = \argmin_{i \in [n]} \min_{y \in \R} V(\xv,y, \nabla_i f(\xv))
\end{align}

First we show that the vectors $\vv$ and $\wv$ defined in Algorithm~\ref{alg-4} gives valid upper and lower bounds on the value of $\min_{y \in \R} V(\xv,y, \nabla_i f(\xv))$. We start with the lower bound $\vv$:

Suppose we have upper and lower bounds, $\ell \leq \nabla_i f(\xv) \leq u$ on one component of the gradient. 
Define $\alpha \in [0,1]$ such that  $\nabla_i f(\xv) = (1-\alpha) \ell + \alpha u$. Note that
 \begin{align}
 (1-\alpha) V_i(\xv,y,\ell) + \alpha V_i(\xv,y,u) = V_i(\xv,y,\nabla_i f(\xv))
 \end{align}
Hence,
\begin{align}
\min \left\{ \min_{y} V_i(\xv,y,u),  \min_{y} V_i(\xv,y,\ell) \right\} \leq \min_y V_i(\xv,y,\nabla_i f(\xv)) \,.
\end{align}
The derivation of the upper bounds $\wv$ is a bit more cumbersome. 
Define $\ell^\star := \argmin_{y\in\R} V_i(\xv,y,\ell)$, $u^\star := \argmin_{y\in\R} V_i(\xv,y,u)$ and observe:
\begin{align}
V_i(\xv,u^\star,\nabla_i f(\xv)) &= V_i(\xv,u^\star,u) - (u - \nabla_i f(\xv)) u^\star \leq  V_i(\xv,u^\star,u) - uu^\star + \max\{uu^\star, \ell u^\star\} =: \omega_u  \\
 V_i(\xv,\ell^\star,\nabla_i f(\xv)) &= V_i(\xv,\ell^\star,\ell) - (\ell - \nabla_i f(\xv)) \ell^\star \leq V_i(\xv,\ell^\star,\ell) - \ell \ell^\star + \max\{u \ell^\star, \ell \ell^\star\} =: \omega_\ell \\
 V_i(\xv,0,\nabla_i f(\xv)) &= \Psi_i([\xv]_i)
\end{align}
Hence $\min_y V_i(\xv,y,\nabla_i f(\xv)) \leq \min \{ \omega_\ell, \omega_u, \Psi_i([\xv]_i) \}$. 

Note 
\begin{align}
\omega_u &=  V_i(\xv,u^\star,u) + \max\{0, (\ell-u) u^\star\} \\
\omega_\ell &=  V_i(\xv,\ell^\star,\ell)  + \max\{0, (u-\ell) \ell^\star\}
\end{align}
which coincides with the formulas in Algorithm~\ref{alg-4}.

It remains to show that the computation of the active set is save, i.e. that the progress achieved by ASCD as defined in Algorithm~\ref{alg-4} is always better than the progress achieved by UCD. Let $\mathcal{I}$ be defined as in Algorithm~\ref{alg-4}. Then
\begin{align}
 \frac{1}{\abs{\mathcal{I}}}  \sum_{i \in \mathcal{I} }   \min_{y \in \R} V_i(\xv,y,\nabla_i f(\xv)) &\leq  \frac{1}{n}  \sum_{i \in [n]}   \min_{y \in \R} V_i(\xv,y,\nabla_i f(\xv)) \\
 & = \frac{1}{n} \min_{\yv \in \R^n}  \sum_{i \in [n]}  V_i(\xv,y,\nabla_i f(\xv)) \,.
\end{align}
Using this observation, and the same lines of reasoning as given in \citep[Section H.3]{nnmf}, it follows immediately that the one step progress of ASCD is at least as good as the for UCD. 

\subsection{GS-r rule}
\label{sec:GS-r}
With the notation $[\yv^\star]_i := \argmin_{y \in \R} V_i(\xv,y,\nabla_i f(\xv))$, the GS-r rule is defined as (cf.~\citet{nnmf})
\begin{align}
 i = \argmax_{i \in [n]} \abs{[\yv^\star]_i}\,.
\end{align}
In order to implement ASCD for GS-r, we need therefore to maintain lower and upper bounds on the values $\abs{[\yv^\star]_i}$.

Suppose we have upper and lower bounds, $\ell \leq \nabla_i f(\xv) \leq u$ on one component of the gradient. Define $\ell^\star := \argmin_{y\in\R} V_i(\xv,y,\ell)$, $u^\star := \argmin_{y\in\R} V_i(\xv,y,u)$, then $y^\star$ is contained in the line segment between $\ell^\star$ and $u^\star$. 
Hence as in Algorithm~\ref{alg-1}, the lower and upper bounds can be defined as
\begin{align}
[\uv_t]_i &:= \max_{y \in \R } \{\ell^\star \leq y \leq u^\star \}\\
[\ellv_t]_i &:= \min_{y \in \R } \{\ell^\star \leq y \leq u^\star \}
\end{align}

However, note that in~\cite{nutini2015coordinate} it is established that GS-r rule can be worse than UCD in general. Hence we cannot expect that ASCD for the GS-r rule is better than UCD in general. However, the by the choice of the active set, the index chosen by the GS-r rule is always contained in the active set, and ASCD approaches GS-r for small errors.

\section{Experimental Details }
\label{sec:datageneration}
We generate a matrix $A \in \mathbb{R}^{m \times n}$ from the standard normal $\mathcal{N}(0,1)$ distribution. $m$ is kept fixed at $1000$ but $n$ is chosen $1000$ for the $l_2$ regularized least squares regression and $5000$ for $l_1$ regularized counterpart. $1$ is added to each entry (to induce a dependency between columns),  multiplied each column by a sample from   $\mathcal{N}(0,1)$ multiplied by ten (to induce different Lipschitz constants across the coordinates), and only kept each entry of $A$ non-zero with probability $10 \frac{\log(n)}{n}$. This is exactly the same procedure which has been discussed in \cite{nutini2015coordinate}.

\end{document}